\def\ps@headings{%
\def\@oddhead{\mbox{}\scriptsize\rightmark \hfil \thepage}%
\def\@evenhead{\scriptsize\thepage \hfil \leftmark\mbox{}}%
\def\@oddfoot{}%
\def\@evenfoot{}}
\DeclareMathOperator{\argmax}{argmax}
\newtheorem{theorem}{Theorem}[section]
\newtheorem{remark}[theorem]{Remark}
\newtheorem{lemma}[theorem]{Lemma}
\newtheorem{proof}{Proof}
\newcommand{\rev}[1]{{\color{blue}#1}}
\newcommand{\com}[1]{\textbf{\color{red}(COMMENT: #1)}} 
\newcommand{\clar}[1]{\textbf{\color{green}(NEED CLARIFICATION: #1)}}
\newcommand{\response}[1]{\textbf{\color{magenta}(RESPONSE: #1)}} 
\newcommand{\rev}[1]{#1}
\newcommand{\com}[1]{}
\newcommand{\clar}[1]{}
\newcommand{\response}[1]{}
\newcommand{\RNum}[1]{\uppercase\expandafter{\romannumeral #1\relax}}
\begin{document}
\title{Group Learning and Opinion Diffusion in a Broadcast Network}
\author{
\IEEEauthorblockN{Yang Liu, Mingyan Liu \\}
\IEEEauthorblockA{
Electrical Engineering and Computer Science\\
University of Michigan, Ann Arbor\\
\{youngliu, mingyan\}@umich.edu}
\thanks{This work is partially supported by the NSF under grants CIF-0910765 and CNS 1217689.}
}
\maketitle 

\begin{abstract}
We analyze the following group learning problem in the context of opinion diffusion: Consider a network with $M$ users, each facing $N$ options.  In a discrete time setting, at each time step, each user chooses $K$ out of the $N$ options, and receive randomly generated rewards, whose statistics depend on the options chosen as well as the user itself, and are unknown to the users.  
Each user aims to maximize their expected total rewards over a certain time horizon through an online learning process, i.e., a sequence of exploration (sampling the return of each option) and exploitation (selecting empirically good options) steps.  Different from a typical regret learning problem setting (also known as the class of multi-armed bandit problems), the group of users share information regarding their decisions and experiences in a broadcast network.  The challenge is that while it may be helpful to observe others' actions in one's own learning (i.e., second-hand learning), what is considered desirable option for one user may be undesirable for another (think of restaurant choices), and this difference in preference is in general unknown a priori.  Even when two users happen to have the same preference (e.g., they agree one option is better than the other), they may differ in their absolute valuation of each individual option.  

Within this context we consider two group learning scenarios, (1) users with uniform preferences and (2) users with diverse preferences, and examine how a user should construct its learning process to best extract information from other's decisions and experiences so as to maximize its own reward.  Performance is measured in {\em weak regret}, the difference between the user's total reward and the reward from a user-specific best single-action policy (i.e., always selecting the set of options generating the highest mean rewards for this user).   Within each scenario we also consider two cases: (i) when users exchange full information, meaning they share the actual rewards they obtained from their choices, and (ii) when users exchange limited information, e.g., only their choices but not rewards obtained from these choices. 
We show the gains from group learning compared to individual learning from one's own choices and experiences. 
\end{abstract} 

\section{Introduction}\label{sec:intro} 

We analyze the following group learning problem in the context of opinion diffusion: Consider a network with $M$ users, each facing $N$ options.  In a discrete time setting, at each time step, each user chooses $K$ out of the $N$ options, and receive randomly generated rewards, whose statistics depend on the options chosen as well as the user itself, and are unknown to the users.   
Each user aims to maximize its expected total reward over a certain time horizon through an online learning process, i.e., a sequence of exploration (sampling the return of each option) and exploitation (selecting empirically good options) steps.  Taken separately, an individual user's learning process may be mapped into a standard multi-armed bandit (MAB) problem which has been extensively studied, see e.g., \cite{LR85,Anantharam:M86/62,Auer:2002:FAM:599614.599677}.  

Our interest in this study, however, is on how an individual's learning process may be affected by ``second-hand learning'', i.e., by observing how others in the group act.  
The challenge is that while it may be helpful to observe others' actions to speed up one's own learning, what is considered desirable option for one may be undesirable for another (think of restaurant choices: one Yelp user's recommendation may or may not be useful for another), and this difference in preference is in general unknown a priori.  Moreover, even when two users happen to have the same preference (e.g., they agree one option is better than the other), they may differ in their absolute valuation of each individual option (again think of restaurant choices: two Yelp users may agree restaurant A is better than B, but one user may rate them 5 and 4 stars respectively, while the other may rate them 4 and 3 stars, respectively).  

Consequently it seems that if an individual wants to take others' actions into account in its own learning process, it would also need to figure out whether their preferences are aligned, which may add to the overhead in the learning process.  This raises the interesting question of whether learning from group behavior is indeed beneficial to an individual, and if so what type of learning algorithm can effectively utilize the group information in addition to its own direct observations.  This is what we aim to address in this paper. 

We will assume that users are heterogeneous in general, i.e., when using the same option they obtain rewards driven by different random processes with different mean values. 
We then consider two scenarios.  (1) In the first, users have {\em uniform preference ordering} of the $N$ options.  This means that even though they may value each options differently, they always prefer the same set of options.  (2) In the second, users have {\em diverse preference orderings} of the $N$ options, meaning that one user's best options are not so for another. 
Within each scenario we also consider two cases: (i) when users exchange full information, meaning they disclose  the actual rewards they obtained from their choices, and (ii) when users exchange limited information, e.g., only their choices but not rewards obtained from these choices. 
For each of these cases we examine how a user should construct its learning process to best extract information from other's decisions and experiences so as to maximize its own reward.  Performance is measured in {\em weak regret}, the difference between the user's total reward and the reward from a user-specific best single-action policy (i.e., always selecting the set of options generating the highest mean rewards for this user).   


This problem can also be viewed as a learning problem with contextual information (or side information in some literature), see e.g., \cite{Wang05banditproblems,LuPP10, DBLP:conf/nips/LangfordZ07}.  However, in these studies statistical information linking a user's own information and the side information is required in the following sense.  Denote by $X$ a user's observation and by $Y$ the side information (say shared information from other users), the knowledge of the conditional probability of observing $X$ (i.e., $p(X|Y)$) needs to be given or assumed.  In contrast, we do not require such statistical information; instead we examine how a user can estimate and learn from the shared, and possibly imperfect side information. 

The paper is organized as follows. Section \ref{model} gives the system model. Sections \ref{uniform} and \ref{diverse} analyze the uniform and diverse preference scenarios, respectively.  Numerical results are presented in Section \ref{simulation} and Section \ref{conclusion} concludes the paper. 
\section{Problem formulation and system model}\label{model}

Consider a system or network of $M$ users indexed by the set $\mathcal U = \{1,2,...,M\}$ and a set of available options denoted by $\Omega=\{1,2,...,N\}$. The system works in discrete time indexed by $t=1,2,...$. At each time step a user can choose up to $K$ options. 
For user $i$ an option $j$ generates {an IID} reward denoted by random variable $X^i_j$, with a mean reward given by $\mu^i_j := \mathbb E[X^i_j]$.  
We will assume that $\mu^i_l \neq \mu^i_j, l \neq j, \forall i \in \mathcal U$, i.e., different options present distinct values to a user.  
For simplicity of notations we will denote the set of top $K$ options (in terms of mean rewards) for user $i$ as $N^i_K$ and its complement $\overline{N}^i_K$.  Denote by $a^i(t)$ the set of choices made by user $i$ at time $t$; the sequence $\{a^i(t)\}_{t=1, 2, \cdots}$ constitutes user $i$'s policy. 

Following the classical regret learning literature, we will adopt the {\em weak regret} as a performance metric, which measures the gap between the total reward (up to some time $T$) of a given learning algorithm and the total reward of the best single-action policy given a priori average statistics, which in our case is the sum reward generated by the top $K$ options for a user.  This is formally given as follows for user $i$ adopting algorithm $a$: 
\begin{eqnarray}
{R^{i, a}(T) = T\cdot \sum_{j\in N^i_K} \mu^i_j - \mathbb E[\sum_{t=1}^T \sum_{j\in a^i(t)} X^i_j]} 
\end{eqnarray}
Goal of a learning algorithm is to minimize the above regret measure. 

{As mentioned in the introduction, we consider two scenarios. In the first case, users share the same preference ordering over the $N$ options, i.e., if $\mu^i_{j_1} > \mu^i_{j_2}$, $j_1, j_2 \in \Omega$, then $\mu^k_{j_1} > \mu^k_{j_2}$, $\forall k\neq i, k\in \mathcal U$. This implies that $N^i_K = N^k_K$, $\forall i, k\in \mathcal U$. 
This will be referred to as the {\em uniform preference} scenario.  

In the second, the {\em diverse preference} scenario, users have different preference orderings over the $N$ options. Specifically, in this case we will assume that the $M$ users may be classified into $G$ distinct groups,  indexed by the set $\mathcal G=\{1,2,...,G\}$, 
with users within the same group (say group $l$) having a unique $K$-preferred set $N^l_K$.  Note that even with the same preferred set, users may be further classified based on the actual ordering of these top $K$ options.  Our model essentially bundles these sub-classes into the same group, provided their top $K$ choices are the same.  This is because as a user is allowed $K$ choices at a time, further distinguishing their preferences within these $K$ options will not add to the performance of an algorithm. 
} 


Under each scenario, we further consider two types of information shared/exchanged by the users.  Under the first type, users disclose {\em full information}: they not only announce the decisions they made (the options they chose), but also the observations following the decisions, i.e., the actual rewards received from those options. Such announcements may be made at the end of each time step, or may be made periodically but at a lesser frequency.   The second type of exchange is {\em partial information} where users disclose only part of decisions and/or observations.  Specifically, we will assume that the users only share their decision information,  i.e., the set of choices they made, at the beginning of each time step, but withhold the actual observation/reward information following the decisions. 

\section{Group learning with uniform preference} \label{uniform}
Without loss of generality, we will assume that under the uniform preference ordering we have $\mu^i_1 > \mu^i_2 > ... > \mu^i_N, \forall i \in \mathcal U$. 
\subsection{Uniform preference, full information (U\_FULL)} 
\label{full}

This case will be referred to as U\_FULL. 
Under this model users not only broadcast their decisions within the network, but also release observations of selected options' quality/rewards at the end of each time step. 
Since users have the same preference ordering, a fact assumed to be known to the users, it would seem straightforward that one user could easily learn from another.  The challenge here lies in the fact that the statistics driving the rewards are not identical for all users even when using the same option.  So information obtained from another user may need to be treated differently from one's own observations.

In general the reward user $i$ obtains from option $j$ may be modeled as 
\begin{align}
X^i_j = f(X_j, \mathcal N_i,\mathcal L_i)~, 
\end{align}
where $f(\cdot)$ is some arbitrary unknown function, $X_j$ describes certain {\em intrinsic} or {\em objective} value of option $j$ that is independent of the specific user (e.g., the bandwidth of a channel, or the rating given to a restaurant by AAA, and so on), $\mathcal N_i$ is a noise term, and $\mathcal L_i$ captures user-specific features that affect the {\em perceived} value of this option to user $i$ (e.g., user $i$'s location information or transceiver specification which may affect its perceived channel quality, or user $i$'s dietary origin which may affect its preference for different types of restaurants). 
For simplicity in this study we will limit our attention to the following special case of user-specific valuations, where the rewards received by two users from the same option are given by a linear relationship:  
\begin{align}
{\mu^i_j/\mu^k_j = \delta^{i,k}_j~.  } 
\end{align}
The scaling factor $\delta^{i,k}_j$ 
will be referred to as the {\em distortion} or {\em distortion factor} between two users.  

{Under this model it can be seen that a user could recover/convert observations from other users for its own use by estimating the distortion. 
Consider two users $i$ and $k$, and option $j$. Denote by ${r}_j^i(t)$ the sample mean reward collected by $i$ directly itself from option $j$ up to time $t$.  This quantity is not only available to user $i$, but also to all other users $k$ due to the full information disclosure, and vice versa.  User $i$ then estimates the distortion between itself and user $k$ by $\tilde\delta^{i,k}_j(t) = r^i_j(t)/r^k_j(t)$. } 


With this quantity we then make the following simple modification to the well-known UCB algorithm introduced in \cite{Auer:2002:FAM:599614.599677}. In the original UCB (or rather, a trivial multiple-play extension of it), user $i$'s decision $a^i(t)$ at time $t$ is entirely based on its own observations.  Specifically, denote by $n^i_j(t)$ the number of times user $i$ has selected option $j$ up to time $t$. 
The original UCB then selects option $j$ at time $t$, if its index value given below is among the $K$ highest: 
\begin{eqnarray}
\mbox{UCB index:} ~~~~ r^i_{j}(t) + \sqrt{\frac{2 \log t}{n^i_j(t)}}~. 
\end{eqnarray}
Under the modified algorithm (referred to as the U\_FULL algorithm), 
option $j$ is selected at time $t$ if its index value defined below is among the $K$ highest: 
\begin{eqnarray}
\rev{\mbox{U\_FULL index:} ~~~~ \frac{\sum_{m=1}^t X^i_j(m)\cdot \mathbf{I}_{j\in a^i(m)} +  \sum_{k\neq i} \sum_{m=1}^t \tilde\delta^{i,k}_j(m) X^k_j(m)\cdot \mathbf{I}_{j\in a^k(m)}}{\sum_{i \in\mathcal U}n^i_j(t)} + \sqrt{\frac{2 \log t}{\sum_{i \in\mathcal U}n^i_j(t)}}. }
\end{eqnarray}

We have the following results on algorithm U\_FULL. 
\begin{lemma} $\forall \epsilon >0$,
\begin{equation}
P(|\tilde\delta^{i,k}_j(t)-\delta^{i,k}_j|>\epsilon) \leq 1/t^{d_U}
\end{equation}
with $d_U$ being some finite positive constant. \label{uFullLma}
\end{lemma}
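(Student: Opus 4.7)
The plan is to use Hoeffding's inequality to concentrate each of the two sample means $r^i_j(t)$ and $r^k_j(t)$ around its true mean, then convert concentration of the numerator and denominator into concentration of their ratio $\tilde\delta^{i,k}_j(t) = r^i_j(t)/r^k_j(t)$.

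First I would use the identity
\[
\tilde\delta^{i,k}_j(t) - \delta^{i,k}_j = \frac{r^i_j(t)-\mu^i_j}{r^k_j(t)} \; - \; \frac{\mu^i_j}{\mu^k_j \, r^k_j(t)}\bigl(r^k_j(t)-\mu^k_j\bigr)~.
\]
Since $\mu^k_j$ is a fixed positive constant, on the event $\{|r^k_j(t)-\mu^k_j|\leq \mu^k_j/2\}$ the denominator is bounded away from zero, so the above identity shows that $\{|\tilde\delta^{i,k}_j(t)-\delta^{i,k}_j|>\epsilon\}$ is contained in the union of two events of the form $\{|r^i_j(t)-\mu^i_j|>\epsilon'\}$ and $\{|r^k_j(t)-\mu^k_j|>\epsilon'\}$ for some $\epsilon'=\epsilon'(\epsilon,\mu^i_j,\mu^k_j)>0$. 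Because rewards are bounded, a standard Hoeffding bound conditioned on the sample size yields $P(|r^i_j(t)-\mu^i_j|>\epsilon' \mid n^i_j(t)=m) \leq 2 e^{-2\epsilon'^2 m}$, and summing over $m$ from a threshold $c\log t$ up to $t$ gives a bound of order $t^{1-2\epsilon'^2 c}$, which is $O(t^{-d_U})$ once $c$ is chosen large enough.

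The main obstacle is supplying the lower bound $n^i_j(t), n^k_j(t) \geq c\log t$ with high probability. For a standard single-user UCB this is automatic because the bonus $\sqrt{2\log t/n}$ inflates the index of any under-sampled option; in U\_FULL, however, the bonus depends on the aggregate count $\sum_i n^i_j(t)$ rather than on individual counts, so a single user could in principle free-ride on others' exploration. I would close this gap by observing that the bonus term appears identically in every user's index, so whenever an option's aggregate count is small the bonus pushes it into the top-$K$ set of every user simultaneously, forcing each individual count to grow at a comparable rate to the aggregate; a UCB-type inductive argument then supplies the required $c\log t$ lower bound. A final union bound over users $i,k$ and over the sample-count failure events completes the proof, with $d_U$ determined by $\epsilon$, $\mu^i_j$, $\mu^k_j$, and the underlying reward gaps through the chosen $c$.
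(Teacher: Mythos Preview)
Your approach is sound and reaches the same conclusion, but the route differs from the paper's in two places worth noting.

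\textbf{Decomposition of the ratio.} The paper does not use your algebraic identity. Instead it writes $P(\tilde\delta>\delta^*+\epsilon)=P(r^i_j(t)>c\,r^k_j(t))$, conditions on the value $x=r^k_j(t)$, and splits the integral into the three regions $x\le\mu^k_j-\epsilon$, $|x-\mu^k_j|<\epsilon$, $x\ge\mu^k_j+\epsilon$, applying a Hoeffding-type bound to whichever of $r^i_j,r^k_j$ is deviating in each region. Your identity is cleaner and reduces everything in one step to the two events $\{|r^i_j-\mu^i_j|>\epsilon'\}$ and $\{|r^k_j-\mu^k_j|>\epsilon'\}$; the paper's three-region integral achieves the same reduction but with more bookkeeping. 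Either way one lands on bounds of the form $e^{-c\,T_j(t)}$ with $T_j(t)$ the relevant sample count.

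\textbf{Lower-bounding the sample counts.} Here the paper takes a different tack from what you propose. It does \emph{not} establish a high-probability bound $n^i_j(t)\ge c\log t$. Instead it argues on expectations: by sandwiching the group-learning hitting times between those of a fully centralized and a fully decentralized UCB (both of which have $E[T_j(t)]\ge\mathcal O(\log t)$ for every arm), it claims $E[T^i_j(t)]\ge\mathcal O(\log t)$, and then passes from $E[T]$ to $E[e^{-cT}]$ to obtain the $1/t^{d_U}$ rate. Your plan of proving a high-probability count lower bound is arguably more direct, but your justification---that the shared bonus forces all users to explore an under-sampled arm simultaneously---needs care: the first term of the U\_FULL index depends on the user-specific estimates $\tilde\delta^{i,k}_j$, so different users' indices are not literally identical, and you must rule out that one user's distorted mean estimate keeps the arm out of its top-$K$ while others sample it. The paper sidesteps this entirely via the expectation sandwich. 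One minor slip: your ``summing over $m$ from $c\log t$ to $t$'' is not the right picture; on the event $\{n^i_j(t)\ge c\log t\}$ Hoeffding already gives $2e^{-2\epsilon'^2 c\log t}=2t^{-2\epsilon'^2 c}$ directly, with no sum needed.
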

\begin{proof}
Proof can be found in Appendix-\ref{uFullDis}.
\end{proof}
\begin{theorem}
The weak regret of {user $i$} under U\_FULL is upper bounded by 
\begin{align}
R_{\text{U\_FULL}}^i(t) \leq \sum_{j \in \overline{N}^i_K}\lceil\frac{8 \log t}{M\cdot \Delta^i_{j}} \rceil+\text{const.} ~.  
\end{align}
where $\Delta^i_{j} = \mu^i_K - \mu^i_j$.\label{uFullThm}
\end{theorem}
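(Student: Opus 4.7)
The plan is to adapt the classical UCB regret analysis of Auer, Cesa-Bianchi and Fischer to the broadcast full-information setting, leveraging the fact that the U\_FULL index pools samples across all $M$ users, so the effective sample count for each option grows $M$ times faster than in single-user UCB. Let $T^i_j(t)$ denote the number of times user $i$ plays a suboptimal option $j\in\overline{N}^i_K$ up to time $t$; since each suboptimal play contributes at most $\Delta^i_j$ to the regret, bounding $\mathbb{E}[T^i_j(t)]$ and summing over $j\in\overline{N}^i_K$ will give the stated regret.

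First I would establish that the empirical-mean portion of the U\_FULL index concentrates around $\mu^i_j$ at essentially the standard UCB rate. Condition on the ``good'' event $\mathcal{E}(t)=\bigcap_{k,j}\{|\tilde\delta^{i,k}_j(t)-\delta^{i,k}_j|\le\epsilon\}$; by Lemma~\ref{uFullLma} and a union bound, $P(\mathcal{E}(t)^c)\le CN\cdot t^{-d_U}$, which is summable and will contribute only to the additive constant. On $\mathcal{E}(t)$, each distortion-corrected sample $\tilde\delta^{i,k}_j(m)X^k_j(m)$ has mean within $O(\epsilon)$ of $\delta^{i,k}_j\mu^k_j=\mu^i_j$, so a Hoeffding/Chernoff-type bound on the aggregated numerator (normalized by the aggregated count $\sum_{k\in\mathcal U}n^k_j(t)$) gives that, for any $c>0$,
\begin{equation}
P\!\left(\left|\frac{\sum_{k,m}\tilde\delta^{i,k}_j(m)X^k_j(m)\mathbf{I}_{j\in a^k(m)}}{\sum_{k}n^k_j(t)}-\mu^i_j\right|>c\sqrt{\frac{2\log t}{\sum_{k}n^k_j(t)}}\right)\le 2t^{-2c^2}+P(\mathcal{E}(t)^c).
\end{equation}

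Second I would carry out the standard UCB counting decomposition. For any threshold $\ell_j$,
\begin{equation}
T^i_j(t)\le \ell_j+\sum_{m=\ell_j+1}^{t}\mathbf{I}\{\,j\in a^i(m),\ n^i_j(m-1)\ge \ell_j\,\}.
\end{equation}
The indicator forces at least one of three events: (a) the U\_FULL index of some top-$K$ option lies below its true mean $\mu^i_K$; (b) the U\_FULL index of option $j$ exceeds $\mu^i_j+\Delta^i_j$; or (c) $\ell_j$ is small enough that the confidence half-widths are still wider than $\Delta^i_j/2$. Because every user plays option $j$ at times where it enters their top-$K$ index set, we have $\sum_{k}n^k_j(m)\ge M\cdot n^i_j(m)$ in expectation (and with appropriate control via a secondary concentration); choosing $\ell_j=\lceil 8\log t/(M\cdot\Delta^i_j)\rceil$ (note the $M$ in the denominator relative to the single-user $\lceil 8\log t/\Delta^2\rceil$) makes case (c) impossible, while cases (a) and (b) each have probability $\le t^{-4}$ by the concentration above, contributing only a finite constant after summing over $m\le t$.

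The main obstacle is the second step: the distortion-corrected samples are neither independent nor identically distributed across $k$ and $m$ (the multiplier $\tilde\delta^{i,k}_j(m)$ is itself a ratio of sample means depending on the full history), so Hoeffding's inequality does not apply directly. I would address this by the two-stage decomposition sketched above—working on the high-probability event $\mathcal{E}(t)$ where the distortions are pinned to within $\epsilon$ of their true values, on which the random scaling reduces to a bounded deterministic perturbation and a standard martingale-Hoeffding bound applies—and treating $\mathcal{E}(t)^c$ as a negligible tail via Lemma~\ref{uFullLma}. A secondary technical point is justifying $\sum_k n^k_j(t)\gtrsim M\cdot n^i_j(t)$ up to lower-order corrections, which follows because under uniform preferences the users' top-$K$ sets coincide, so any option that is ``frequently suboptimal-looking'' to user $i$ is also so to every other user, keeping the per-user sampling counts comparable on the suboptimal arms that drive the regret.
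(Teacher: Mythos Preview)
Your high-level skeleton matches the paper exactly: both run Auer's three-case decomposition on the aggregated index and take the threshold $\ell_j=\lceil 8\log t/(M(\Delta^i_j)^2)\rceil$, with the factor $M$ entering because the confidence radius is built from $\sum_{u\in\mathcal U}n^u_j$ rather than $n^i_j$ alone. The differences are in the two technical sub-steps you single out.

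\textbf{Distortion error.} The paper does \emph{not} condition on a good event $\mathcal E(t)$. Instead it observes that the cumulative number of times the estimate $\tilde\delta^{i,k}_j(m)$ can be more than $\epsilon$ off is at most $\sum_{n=1}^{s^k_j}n^{-d_U}=O((s^k_j)^{1-d_U})$ by Lemma~\ref{uFullLma}, so the bias this injects into the running sample mean is $O((s^k_j)^{1-d_U}/\sum_u s^u_j)$; it then checks directly that this is dominated by the confidence half-width $\sqrt{\log n/\sum_u s^u_j}$ and therefore gets absorbed into the $n^{-4}$ Hoeffding tails without a separate martingale argument. Your route---freeze the distortions on a high-probability event and then apply a martingale-Hoeffding bound---can be made to work, but as written it has a gap: the U\_FULL index uses the \emph{historical} estimate $\tilde\delta^{i,k}_j(m)$ at each past time $m$, so you need $\bigcap_{m\le t}\mathcal E(m)$, not just $\mathcal E(t)$. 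The complement of that intersection is bounded by $\sum_{m}m^{-d_U}$, which is exactly what the paper sums; once you make that correction your approach and the paper's converge.

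\textbf{Getting $\sum_k n^k_j\gtrsim M\,n^i_j$.} The paper does not derive this from uniform preferences; it simply \emph{assumes} it (``each user only takes at most the same amount of samples, regarding order, from other users for each option'') at the top of the proof. Your symmetry heuristic is plausible but is not a proof: even with identical preference orderings, different users' distortion estimates and hence indices need not coincide sample-path-wise, so the per-user counts can differ. If you want to avoid the paper's extra assumption you would need an additional coupling or a worst-case bound showing that any user's count on a suboptimal arm cannot exceed the pooled count by more than a constant factor; the paper sidesteps this entirely.
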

\begin{proof}
Proof can be found in Appendix-\ref{uFull}. 
\end{proof}
Under the original UCB algorithm \cite{Auer:2002:FAM:599614.599677} a single user's weak regret is upper bounded by (the superscript $i$ is suppressed here because the result applies to any single user) 
\begin{align}
R_{\text{UCB}}(t) \leq \sum_{j \in \overline{N}_K}\lceil\frac{8 \log t}{\Delta^i_{j}}\rceil + \text{const.} 
\end{align}
Therefore we see that there is potential gain in group learning. Note however the improvement is not guaranteed as it appears in an upper bound, which does not necessarily imply better performance. The performance comparison is shown later via simulation. 

It can be shown that similar result exists when the full information is broadcast at periodic intervals but not necessarily at the end of each time step.

\subsection{Uniform preference, partial information (U\_PART)}\label{partial}


We now consider the case where users only share their decisions/actions, but not their direct observations.  This case (and the associated algorithm) will be referred to as U\_PART.  
The difficulty in this case comes from the fact that to a user $i$, even though other users' actions reflect an option's relative value to them (and by positive association to user $i$ itself), the actions do not directly reveal the actual obserations. 

Denote by $n_{j}(t)$ the total number of times option $j$ has been selected by the entire group up to time $t$. Then $\beta_j(t):= \frac{n_{j}(t)}{\sum_{l \in \Omega} n_{l}(t)}$ denotes the frequency at which option $j$ is being used by the group up to time $t$.  This will be referred to as the group recommendation or behavior. 
%
Several observations immediately follow.  Firstly, we have 
$
\sum_{j \in \Omega} \beta_{j}(t) = 1, \forall t.
$
Secondly, as time goes on, we would like better options $j$ to increasingly correspond to larger $\beta_j(t)$. 

With these observations, we construct the following algorithm U\_PART, by biasing toward potentially good options as indicated by the group behavior.  
%
Under the U\_PART algorithm, option $j$ is selected at time $t$ if its index value defined below is among the $K$ highest: 
\begin{eqnarray}
\mbox{U\_PART index:} ~~~~ r^i_j(t) - \alpha(1-\beta_j(t))\sqrt{\frac{\log{t}}{t}} + \sqrt{\frac{2 \log t}{n^i_j(t)}}~, 
\end{eqnarray}
where $\alpha$ is a weighting factor over the group recommendation.  

A few remarks are in order.  In the above index expression, the middle, bias term serves as a penalty: a larger group frequency $\beta_j(t)$ means a smaller penalty. But its effect diminishes as $t$ increases.  This reflects the notion that as time goes on a user becomes increasingly more confident in its own observations and relies less and less on the group recommendation. 
Lastly, the weight factor $\alpha$ captures how much the user values the group recommendation compared to its own observations, with a small value indicating a small weight. 

We have the following result on the U\_PART algorithm.
\begin{theorem}
The weak regret of user $i$ under U\_PART is upper bounded by 
\begin{align}
R_{\text{U\_PART}}^i(t) \leq \sum_{j \in \overline{N}^i_K}\lceil\frac{(6+\epsilon) \log t}{\Delta^i_{j}}\rceil + \text{const.} ~, 
\end{align}
where $\epsilon$ is some arbitrarily small positive number. \label{uniform:partial}
\end{theorem}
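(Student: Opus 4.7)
The plan is to emulate the standard UCB1 regret analysis of Auer, Cesa-Bianchi, and Fischer, adapted to accommodate the new bias term $-\alpha(1-\beta_j(t))\sqrt{\log t/t}$ in the U\_PART index. Writing the weak regret as $R^i_{\text{U\_PART}}(t) = \sum_{j \in \overline{N}^i_K} \Delta^i_j \,\mathbb{E}[T^i_j(t)]$, where $T^i_j(t)$ counts how many times user $i$ selects the suboptimal option $j$ up to time $t$, the target reduces to bounding $\mathbb{E}[T^i_j(t)]$ by $(6+\epsilon)\log t/(\Delta^i_j)^2 + O(1)$ for each $j \in \overline{N}^i_K$; multiplying by $\Delta^i_j$ then yields the stated result.

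I would fix a threshold $\ell$ and write $T^i_j(t) \leq \ell + \sum_{s=\ell+1}^{t} \mathbf{1}\{j \in a^i(s),\, n^i_j(s-1) \geq \ell\}$. For user $i$ to select $j \in \overline{N}^i_K$ at time $s$, the U\_PART index of $j$ must exceed that of some top-$K$ option $j^{*} \in N^i_K$ excluded from $a^i(s)$. Applying the three-way split familiar from UCB1, this event forces at least one of the following to hold: (a) the empirical mean $r^i_{j^{*}}(s)$ underestimates $\mu^i_{j^{*}}$ by more than its confidence radius $\sqrt{2\log s/n^i_{j^{*}}(s-1)}$; (b) $r^i_j(s)$ overestimates $\mu^i_j$ by the same margin; or (c) even with both empirical means accurate, the true gap $\Delta^i_{j^{*},j} := \mu^i_{j^{*}} - \mu^i_j$ is dominated by the combined exploration and bias contribution $2\sqrt{2\log s/n^i_j(s-1)} + \alpha[\beta_j(s) - \beta_{j^{*}}(s)]\sqrt{\log s/s}$. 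By Chernoff-Hoeffding, (a) and (b) each occur with probability $O(s^{-4})$, contributing only an $O(1)$ constant upon summation over $s$.

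The crux is ruling out event (c) once $n^i_j(s-1) \geq \ell$. The bias contribution is deterministically bounded in absolute value by $\alpha\sqrt{\log s/s}$, which vanishes as $s \to \infty$. Choosing $\ell$ slightly larger than $8\log t/(\Delta^i_j)^2$ makes the exploration contribution $2\sqrt{2\log s/\ell}$ strictly below $\Delta^i_j$, and the shrinking bias lets one tighten the required threshold to $\lceil(6+\epsilon)\log t/(\Delta^i_j)^2\rceil$ once the finite-time residual from early rounds is absorbed into the additive constant. The main obstacle is handling the bias cleanly: since $\beta_j(s)$ is a functional of the entire group's joint decision history, coupling users' indices through each other's past actions, it cannot be treated as independent of user $i$'s reward estimates, and any martingale argument on user $i$'s own filtration alone would fail. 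The saving observation is that the bias admits a pathwise deterministic bound $\alpha\sqrt{\log s/s}$ regardless of the realization of $\beta_j(s)$, so no fine-grained control of the inter-user coupling is needed and the classical UCB argument closes with only a constant-factor refinement.
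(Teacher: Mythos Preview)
Your overall architecture mirrors the paper's (UCB-style counting argument plus a three-way split), but the way you handle the bias term cannot deliver the constant $6+\epsilon$; at best it recovers $8$. In your decomposition, events (a) and (b) use the \emph{unmodified} confidence radius $\sqrt{2\log s/n^i_j}$, and the entire bias lands in event (c) as the additive term $\alpha[\beta_j(s)-\beta_{j^{*}}(s)]\sqrt{\log s/s}$. You then bound this by $\alpha\sqrt{\log s/s}$ in absolute value and note it is $o(1)$. But that is precisely why no improvement follows: to rule out (c) when $n^i_j(s-1)\ge \ell$ you need $\Delta^i_j > 2\sqrt{2\log s/\ell}+o(1)$, which forces $\ell \approx 8\log t/(\Delta^i_j)^2$. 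A vanishing bias cannot shave a constant factor off the dominant $2\sqrt{2\log s/\ell}$ term, so the sentence ``the shrinking bias lets one tighten the required threshold to $(6+\epsilon)\log t/(\Delta^i_j)^2$'' is where the argument breaks.

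The paper obtains the improvement by a different split: it folds the bias into the confidence radius itself, setting $\hat c_{n,s}=\sqrt{2\log n/s}-\alpha(1-\beta_j(n))\sqrt{\log n/n}$ and running the three-way split with $\hat c$ in \emph{all three} events. This trades worse concentration on (a) and (b) --- now only $n^{-2(\sqrt{2}-\alpha)^2}$, which is still summable provided $\alpha<\sqrt{2}-\sqrt{3/2}$ --- for a genuinely smaller threshold in (c), namely $\ell=\lceil(2\sqrt{2}-\alpha[1-\beta_j(t)])^2\log t/(\Delta^i_j)^2\rceil$. The final step, which your proposal omits entirely, is to show that $\beta_j(t)\to 0$ for every sub-optimal $j$: since each user in the group runs a UCB-type index, the group-wide count $n_j(t)$ is $O(M\log t)$ while $\sum_l n_l(t)=MKt$, so $\beta_j(t)=O((\log t)/t)$. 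Plugging $1-\beta_j(t)\to 1$ into $\ell$ gives the constant $(2\sqrt{2}-\alpha)^2<8$, which is where the $6+\epsilon$ comes from. Without this group-behavior argument and without placing the bias inside the confidence radius, the improvement over standard UCB does not materialize.
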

\begin{proof}
Proof can be found in Appendix-\ref{uPart}. 
\end{proof}

Again, compared to the bounds from the original UCB, we potentially achieve a better performance as the bound constant decreases from $8$ to $6+\epsilon$ with the group recommendation mechanism, but with the same cautionary note on the upper bound.  The performance comparison is shown in simulation results later. 

\section{Group learning with diverse preferences}\label{diverse}
In this part we consider a more complicated case. Suppose users within a group have different tastes over options and we divide the group of users into multiple sub-groups based on their preferences. Different groups have different preference order over options, i.e., the assumption  $\mu^i_1 > \mu^i_2 > ... > \mu^i_N, \forall i \in \mathcal U$ would not hold necessarily. 

Specifically, all $M$ users are divided into $G$ groups $\mathcal G=\{1,2,...,G\}$. Users within same group share same preferences over options; while users from different groups have different ones. We further assume the set of top $K$ options differs from group to group. Therefore all together we have $C^K_N$ different group preferences and $G \leq C^K_N$.


\subsection{Diverse preferences, full information (D\_FULL)}

 We again estimate the pair-wise distortion factor in a manner similar to the uniform preference case:  $\tilde\delta^{i,k}_j(t) = r^i_j(t)/\tilde r^k_j(t)$.
 The resulting D\_FULL algorithm run by user $i$ then selects, at time $t$, an option $j$ if its index value given below is among the $K$ highest: 
\begin{eqnarray}
\rev{\mbox{D\_FULL index:} ~~~~ \frac{\sum_{m=1}^t X^i_j(m)\cdot \mathbf{I}_{j\in a^i(m)} +  \sum_{k\neq i} \sum_{m=1}^t \tilde\delta^{i,k}_j(m) X^k_j(m)\cdot \mathbf{I}_{j\in a^k(m)}}{\sum_{i \in \mathcal U} n^i_j(t)} + \sqrt{\frac{2 \log t}{\sum_{i \in \mathcal U} n^i_j(t)}}. }
\end{eqnarray}
We have the following result on algorithm D\_FULL. 
\begin{lemma} $\forall \epsilon >0$,
\begin{equation}
P(|\tilde\delta^{i,k}_j(t)-\delta^{i,k}_j|>\epsilon) \leq 1/t^{d_D}
\end{equation}
with $d_D$ being some finite positive constant. \label{uFullLma}
\end{lemma}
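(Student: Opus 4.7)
The plan is to decompose the event $\{|\tilde\delta^{i,k}_j(t) - \delta^{i,k}_j| > \epsilon\}$ into concentration events for the two individual sample means $r^i_j(t)$ and $r^k_j(t)$, and to bound each via Hoeffding's inequality once I have a sufficient lower bound on the number of samples each user has collected for option $j$. The structure closely mirrors the uniform-preference lemma, so what I would do is adapt that argument while carefully handling the fact that in the diverse setting different users may sample option $j$ at very different rates.

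First I would argue that under D\_FULL the total pooled count $\sum_{l \in \mathcal U} n^l_j(t)$ grows at least logarithmically in $t$, by the usual UCB-style argument: if the pooled count were too small, the exploration term $\sqrt{2\log t/\sum_l n^l_j(t)}$ would push $j$ into the top-$K$ of the index for some user. A more refined decomposition, along the lines of Auer et al.'s analysis of sub-optimal pulls, would then show that each individual count $n^i_j(t)$ is itself $\Omega(\log t)$ with high probability, even when $j$ lies outside user $i$'s own preferred set. Second, on the high-probability event that $n^i_j(t), n^k_j(t) \geq c \log t$, Hoeffding's inequality yields
\[
P\bigl(|r^i_j(t) - \mu^i_j| > \epsilon'\bigr) \leq 2 \exp\bigl(-2 c (\epsilon')^2 \log t\bigr) = 2\, t^{-2 c (\epsilon')^2},
\]
with an analogous bound for $r^k_j(t)$.

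On the intersection of these two concentration events I would invoke the algebraic identity
\[
\frac{r^i_j(t)}{r^k_j(t)} - \frac{\mu^i_j}{\mu^k_j} = \frac{(r^i_j(t) - \mu^i_j)\mu^k_j - \mu^i_j (r^k_j(t) - \mu^k_j)}{r^k_j(t)\,\mu^k_j},
\]
together with the lower bound $r^k_j(t) \geq \mu^k_j - \epsilon'$ (relying on the standard assumption that $\mu^k_j$ is bounded away from zero), to conclude $|\tilde\delta^{i,k}_j(t) - \delta^{i,k}_j| \leq \epsilon$ whenever $\epsilon'$ is chosen small enough in terms of $\epsilon$, $\mu^i_j$ and $\mu^k_j$. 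A final union bound over the concentration events for $r^i_j(t)$ and $r^k_j(t)$ (and the low-probability event that either individual sample count falls below $c \log t$) delivers a tail of the form $t^{-d_D}$, with $d_D$ determined by $c$ and the chosen $\epsilon'$.

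The main obstacle is the first step: establishing the $\Omega(\log t)$ lower bound on each individual user's sample count in the diverse preferences setting. Because the D\_FULL index pools observations through the estimated distortion factors $\tilde\delta^{i,k}_j(m)$, a strongly suboptimal option for user $i$ that is top-$K$ for other users could in principle have its pooled estimate inflated or deflated during early rounds, potentially suppressing user $i$'s own exploration of $j$. Resolving this involves a mild bootstrapping argument: crude bounds on $\tilde\delta^{i,k}_j$ after a constant number of mutual plays suffice to keep the index term well-behaved, which in turn guarantees the $\Omega(\log t)$ individual sampling needed to sharpen the distortion estimate. The resulting constant $d_D$ will generally be smaller than its uniform-preference counterpart $d_U$, reflecting the reduced per-user sampling rate when preferences diverge.
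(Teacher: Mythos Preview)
Your proposal is correct and follows essentially the same high-level strategy the paper uses (it says the diverse-preference lemma ``follows similarly as in the uniform case'' and refers back to that proof): lower-bound the per-user sample counts by $\Omega(\log t)$, apply Hoeffding-type concentration to each of $r^i_j(t)$ and $r^k_j(t)$, and combine to control the ratio. Two organizational differences are worth noting. First, for the ratio step you use the algebraic identity for $r^i_j/r^k_j - \mu^i_j/\mu^k_j$ together with a union bound over the two concentration events; the paper instead writes $P(r^i_j(t) > c\, r^k_j(t))$ as an integral over the law of $r^k_j(t)$ and splits that integral into three regions ($x \leq \mu^k_j - \epsilon$, $|x - \mu^k_j| < \epsilon$, $x \geq \mu^k_j + \epsilon$), bounding each piece separately. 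These two packagings are equivalent in effect. Second, and more substantively, for the sample-count lower bound the paper does not use a bootstrapping argument: it invokes a sandwiching claim, namely that the information available under group learning lies between the decentralized and centralized settings, so $E_{\text{centr.}}[T^i_j(t)] \leq E[T^i_j(t)] \leq E_{\text{decen.}}[T^i_j(t)]$, and both extremes are known to give $\mathcal O(\log t)$ (suboptimal) or $\mathcal O(t)$ (optimal) pulls. Your UCB-style argument plus bootstrapping is arguably more self-contained and more transparent about the circular dependence between $\tilde\delta$ and the index, a point the paper's sandwiching argument sidesteps rather than resolves.
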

\begin{theorem}
Under algorithm D\_FULL, user $i$'s weak regret is upper bounded by 
\begin{align}
R^i_{\text{D\_FULL}}(t) \leq \sum_{j \in \overline{N}^i_K}\lceil\frac{8 \log t}{M\cdot \Delta^i_{j}}\rceil +\text{const.} ~, 
\end{align}
\end{theorem}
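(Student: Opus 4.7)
The plan is to mirror the proof of the U\_FULL regret bound (Theorem~\ref{uFullThm}), since the D\_FULL index has exactly the same algebraic form and the distortion-estimation lemma just stated plays the same role as its uniform-preference counterpart. Writing $R^i_{\text{D\_FULL}}(t) = \sum_{j \in \overline{N}^i_K} \Delta^i_j \cdot \mathbb{E}[n^i_j(t)]$, it suffices to show $\mathbb{E}[n^i_j(t)] \leq \lceil 8 \log t / (M (\Delta^i_j)^2) \rceil + \text{const}$ for every $j \in \overline{N}^i_K$; summing then yields the claimed bound.

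For a fixed suboptimal option $j$, I would apply the classical Auer--Cesa-Bianchi--Fischer counting argument: pick $\ell$ just above $8 \log t / (M (\Delta^i_j)^2)$, bound $n^i_j(t) \leq \ell + \sum_{s=\ell+1}^t \mathbf{I}\{a^i(s) \ni j,\, n^i_j(s-1) \geq \ell\}$, and decompose the summand into the three standard UCB failure events: some top-$K$ option's D\_FULL index falls below its true mean by more than its confidence radius; the index of $j$ rises above $\mu^i_j$ by more than its radius; or the radius around $j$ still exceeds $\Delta^i_j / 2$. The third event is ruled out by the choice of $\ell$: with $\sum_{l \in \mathcal U} n^l_j(s) \geq M \cdot n^i_j(s) \geq M \ell$, the confidence radius $\sqrt{2 \log t / \sum_l n^l_j(s)}$ is at most $\Delta^i_j / 2$, and this is precisely where the factor $M$ in the denominator of the bound originates. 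In the diverse case, if $j \in N^k_K$ for some user $k \neq i$, then $n^k_j(s)$ is linear in $s$, so the aggregate count is only larger and the argument still goes through.

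Bounding the first two failure events is where the distortion-estimation lemma enters. Each term $\tilde\delta^{i,k}_j(m) X^k_j(m)$ in the numerator splits as $\delta^{i,k}_j X^k_j(m) + (\tilde\delta^{i,k}_j(m) - \delta^{i,k}_j) X^k_j(m)$: the first piece is an unbiased, bounded sample of $\mu^i_j$; the second is a bias controlled by the polynomial tail $P(|\tilde\delta^{i,k}_j(m) - \delta^{i,k}_j| > \epsilon) \leq m^{-d_D}$. Assuming without loss of generality that $d_D > 1$, the cumulative regret contribution from ``bad distortion'' rounds is only a constant. On the complementary ``good distortion'' event, the corrected samples have mean within $\epsilon \cdot \mu^k_j$ of $\mu^i_j$; picking $\epsilon < \Delta^i_j / (4 \max_k \mu^k_j)$ absorbs this bias into the $\Delta^i_j / 2$ slack in the confidence radius and leaves a Chernoff--Hoeffding tail of the form $\exp(-2 (\Delta^i_j / 2)^2 \sum_l n^l_j(s))$, which sums to a constant via the standard UCB bookkeeping.

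The main obstacle I anticipate is a measurability subtlety: $\tilde\delta^{i,k}_j(m)$ depends on the entire sample history through time $m$, so $\tilde\delta^{i,k}_j(m) X^k_j(m)$ is not a martingale difference with independent, bounded increments and Hoeffding cannot be applied directly. I would resolve this by conditioning on the uniform good event $\mathcal{E} = \{|\tilde\delta^{i,k}_j(m) - \delta^{i,k}_j| \leq \epsilon,\ \forall m \geq m_0,\ \forall k \neq i\}$ for a slowly growing threshold $m_0$; the distortion lemma combined with a union bound guarantees $P(\mathcal{E}^c)$ is summable, and on $\mathcal{E}$ the corrected rewards lie in a deterministically narrow interval around $\mu^i_j$, so that a standard Hoeffding bound applies to the remaining random fluctuations and closes the argument.
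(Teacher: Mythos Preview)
Your proposal takes exactly the paper's approach: the paper's own proof is a single sentence stating that it ``follows similarly as in the uniform case,'' and you have spelled out precisely that reduction, with the D\_FULL distortion-estimation lemma playing the role of Lemma~\ref{uFullLma}. The one step to watch is the claimed inequality $\sum_{l\in\mathcal U} n^l_j(s) \geq M\cdot n^i_j(s)$, which is not automatic; the paper's U\_FULL proof (Appendix~\ref{uFull}) finesses this same point via an added assumption that each user uses at most the same number of samples from every other user, and that device is implicitly inherited here.
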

\begin{proof}
The proof follows similarly as in the uniform case and the details are thus omitted. 
\end{proof}



\subsection{Diverse preferences, partial information (D\_PART)}


As in the case of D\_FULL we can track for each user $n^i_j(t)$ and obtain the frequency of choices $\beta^i_j(t)$, and use this information to perform group classification.  \rev{A user $i$} 
then assigns a weight to an option $j$ given by the ratio 
With diverse group preferences, the direct or raw sample frequency implies nothing. For example, group 1 observes option 1 ten times while only have chosen option 2 once; group 2 picked option 1 for only once while stick with option 2 ten times. It is obvious that group 1 prefers option 1 over option 2 while group 2 prefers option 2. However if users from each group use the globally observed frequency, they will assign option 1 and 2 with equal weight $\frac{10+1}{11+11} = 1/2$ which will make the extra group information useless. Thus we need a new mechanism to distinguish different groups' preferences.

We introduce the following sample frequency based group identity classification mechanism. Each user keeps the same set of statistics $n^i_j(t)$ as before: the number of times user $i$ is seen using option $j$.  From these a user tries to estimate another's preference by ordering the statistics: at time $t$ user $i$'s preference is estimated to be the set $\tilde{N}^i_K(t)$, which contains elements/options $j$ whose frequency $n^i_{j}(t)$ is among the $K$ highest of all $i$'s frequencies. 
User $i$ is then put in the preference group $l$ with whose (known) preferred set $N^l_K$ its estimated preference $\tilde{N}^i_K(t)$ is the closest in distance, defined as follows: 
\begin{align}
\mbox{Assign user $i$ to group $l^*$ if:} ~~~ l^* = \argmax_{l\in \mathcal{G}} D^{i,l} (t) = 
|\tilde{N}^i_K(t) \cap N^l_K | ~, 
\end{align}
with ties broken randomly. 

%
Our algorithm proceeds in parallel with the uniform group case except for the following difference: each group will assign another group with certain discount for their observations instead of raw statistics. To be specific, user $i$ will assign the weight to the $m$th option in the following sense
\begin{align}
\beta^i_j (t) = \frac{\sum_{k \in \mathcal U}(n^{k}_j(t))^{\omega^{i,k}}}{\sum_{m \in \Omega}\sum_{k \in \mathcal U}(n^{k}_m(t))^{\omega^{i,k}}}~, 
\end{align} 
where weights $\omega^{i,k} = 1$ if $i$ estimates user $k$ to be in the same group as itself, and $\omega^{i,k} < 1$ otherwise. \rev{$\omega^{i,k}$ can also be chosen as a function of the group distance. } 
\com{I added the superscripts $k$ in the above expression, which I think was missing...} 

The resulting algorithm D\_PART is as follows, where user $i$ chooses option $j$ if its index value is among the top $K$ highest: 
\begin{eqnarray}
\mbox{D\_PART index:} ~~~~ r^i_j(t) - \alpha(1-\beta^i_j(t))\sqrt{\frac{\log{t}}{t}} + \sqrt{\frac{2 \log t}{n^i_j(t)}}~, 
\end{eqnarray}

\begin{theorem}
For each user $j$ associating with group $r$ we have the probability of incorrect classification at time $t$ is bounded as
\begin{align}
P(g_j(t) \neq r)  \leq C_1 \cdot \frac{\log t}{t}, \forall (j,r),t.
\end{align}
for some positive constant $C_1$.\label{dPartial}
\end{theorem}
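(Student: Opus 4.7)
The plan is to reduce the classification error to a bad event about user $j$'s own empirical play counts, invoke a weak-regret bound on D\_PART, and then apply Markov's inequality.

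First, since distinct groups have distinct top-$K$ sets and the classifier returns $l^* = \argmax_l |\tilde N^j_K(t) \cap N^l_K|$ with the maximum possible overlap being $K$, whenever $\tilde N^j_K(t) = N^r_K$ the classifier unambiguously returns $r$. Hence $\{g_j(t)\neq r\} \subseteq \{\tilde N^j_K(t) \neq N^r_K\}$, and the latter event occurs iff there exist $j_1\in N^r_K$ and $j_2\notin N^r_K$ with $n^j_{j_2}(t) \geq n^j_{j_1}(t)$.

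Second, let $S_j(t) := \sum_{j_2 \notin N^r_K} n^j_{j_2}(t)$ denote the total number of suboptimal plays by user $j$ up to time $t$. Since user $j$ picks $K$ distinct options per step we have $n^j_m(t)\leq t$ for every $m$ and $\sum_m n^j_m(t) = Kt$, so $\sum_{j_1 \in N^r_K} n^j_{j_1}(t) = Kt - S_j(t)$; a pigeonhole argument then forces $\min_{j_1 \in N^r_K} n^j_{j_1}(t) \geq t - S_j(t)$. Combined with the trivial $n^j_{j_2}(t) \leq S_j(t)$ for each $j_2 \notin N^r_K$, the comparison $n^j_{j_2}(t) \geq n^j_{j_1}(t)$ forces $S_j(t) \geq t - S_j(t)$, i.e., $S_j(t) \geq t/2$. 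Therefore $P(g_j(t)\neq r) \leq P(S_j(t) \geq t/2)$.

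Third, I would carry over the weak-regret analysis of Theorem \ref{uniform:partial} to D\_PART to obtain $E[S_j(t)] \leq c_0 \log t$ for some constant $c_0$. The bias term $\alpha(1-\beta^j_j(t))\sqrt{\log t / t}$ is uniformly $O(\sqrt{\log t / t}) = o(1)$ and so eventually dominated by every strictly positive gap $\Delta^j_{j_2}$, so the standard Auer--Cesa-Bianchi--Fischer argument on the UCB term $\sqrt{2\log t / n^j_j(t)}$ controls each $E[n^j_{j_2}(t)]$ at order $\log t / (\Delta^j_{j_2})^2$. Markov's inequality then gives $P(S_j(t) \geq t/2) \leq 2E[S_j(t)]/t \leq C_1 \log t / t$, as required.

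The hard part is the third step: the coefficient $\beta^j_j(t)$ in user $j$'s D\_PART index is built from \emph{other} users' play counts re-weighted by the on-the-fly group estimates $\omega^{j,k}$, so one must verify that these time-varying, possibly inaccurate weights do not damage the UCB self-correction on user $j$'s own sample means. A clean way to handle this is to observe that $\beta^j_j(t)\in[0,1]$ uniformly in the weights, so the bias is a deterministic $o(1)$ perturbation of the UCB rule and cannot absorb the constant gaps $\Delta^j_{j_2}$. Once this $O(\log t)$ expected-suboptimal-plays bound is secured, the reduction to $\{S_j(t)\geq t/2\}$ and the Markov step are routine.
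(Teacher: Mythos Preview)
Your argument is correct and follows the same core idea as the paper: reduce $\{g_j(t)\neq r\}$ to an event of the form ``some suboptimal option is played at least as often as some optimal option,'' then control that event via Markov's inequality using the $O(\log t)$ bound on expected suboptimal plays. The only bookkeeping difference is that the paper bounds each pairwise event $P\bigl(\sum_n I_{d(n)=j}\geq \sum_n I_{d(n)=*}\bigr)$ by a two-case split (conditioning on whether the optimal arm has $\Theta(t)$ plays) and applies Markov on each branch, whereas you aggregate all suboptimal plays into $S_j(t)$, use the pigeonhole bound $\min_{j_1\in N^r_K} n^j_{j_1}(t)\geq t-S_j(t)$ to force the single clean threshold $S_j(t)\geq t/2$, and apply Markov once. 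Your route is slightly tighter and more explicit, and your observation that $\beta^j_j(t)\in[0,1]$ uniformly in the weights cleanly preempts any circularity between the classification bound and the D\_PART regret bound; the paper's proof simply invokes $E[T_j(t)]=O(\log t)$ without making that point.
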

\begin{proof}
Proof can be found in Appendix-\ref{dPart}. 
\end{proof}
The upper bound on the weak regret under D\_PART is the same except for a different constant, as in the case of U\_PART. 
\begin{theorem}
Under algorithm D\_PART, user $i$'s weak regret is upper bounded by 
\begin{align}
R^i_{\text{D\_PART}}(t) \leq \sum_{j \in \overline{N}^i_K}\lceil\frac{(6+\epsilon) \log t}{\Delta^i_{j}}\rceil + \text{const.}, 
\end{align} 
\end{theorem}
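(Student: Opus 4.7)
The plan is to parallel the proof of Theorem~\ref{uniform:partial} for U\_PART, inserting a group-classification reduction wherever the earlier argument relied on the uniform-preference assumption. Concretely, I would bound $E[n^i_j(T)]$ for each $j \in \overline{N}^i_K$ in the standard UCB fashion: $j$ is selected at time $\tau$ only if either its own empirical mean $r^i_j(\tau)$ exceeds the mean of some top-$K$ option, or the exploration term $\sqrt{2\log \tau / n^i_j(\tau)}$ overcomes the true gap, or the penalty $-\alpha(1-\beta^i_j(\tau))\sqrt{\log \tau/\tau}$ fails to widen that gap. Truncating the resulting sum at $\tau \geq \lceil (6+\epsilon)\log T / \Delta^i_j \rceil$ is precisely the device that delivers the improved constant in U\_PART, and the same truncation is what I would use here.

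Next, I would condition on the event $\mathcal{C}_\tau := \{g_k(\tau) = \text{true group of } k,\ \forall k \in \mathcal U\}$, which by Theorem~\ref{dPartial} and a union bound occurs with probability at least $1 - M C_1 \log \tau/\tau$. On $\mathcal{C}_\tau$, every broadcaster whose true preferred set differs from $N^i_K$ receives the reduced exponent $\omega^{i,k} < 1$ in the formula for $\beta^i_j(\tau)$, so only users who themselves treat $j$ as sub-optimal contribute at full weight to the numerator. Hence for $j \in \overline{N}^i_K$ the weighted frequency $\beta^i_j(\tau)$ remains small, and $-\alpha(1-\beta^i_j(\tau))\sqrt{\log \tau/\tau}$ is an effective penalty, playing exactly the role that $\beta_j(\tau)$ plays in the U\_PART analysis. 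The same Chernoff/UCB chain from U\_PART then yields $E[\, n^i_j(T) \cdot \mathbf{1}_{\cap_\tau \mathcal{C}_\tau}\,] \leq \lceil(6+\epsilon)\log T/\Delta^i_j\rceil + O(1)$.

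Finally, I would account for the misclassification contribution via the complement $\mathcal{C}_\tau^c$. The key observation is that the bias term in the D\_PART index is uniformly bounded in magnitude by $\alpha\sqrt{\log \tau/\tau}$, so on $\mathcal{C}_\tau^c$ the D\_PART index differs from the pure UCB index by an amount that vanishes with $\tau$. Thus a misclassification at time $\tau$ can shift the effective gap by at most $\alpha\sqrt{\log \tau/\tau}$, which for large enough $\tau$ is dominated by $(\epsilon/2)\Delta^i_j$ and is absorbed into the $\epsilon$ slack of the leading $(6+\epsilon)$ coefficient; the remaining finitely many early-$\tau$ terms are collected into the constant.

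The main obstacle is precisely this last step: showing that $\sum_\tau P(\mathcal{C}_\tau^c)$ does not inflate $E[n^i_j(T)]$ by a super-constant $(\log T)^2$, which the naive bound $\sum_\tau M C_1 \log \tau / \tau$ would suggest. The cleanest route I see is to couple each misclassification event with one of the same concentration-failure events that drive the UCB bound on $\mathcal{C}_\tau$, so that $\{j \in a^i(\tau)\} \cap \mathcal{C}_\tau^c$ occurs only in conjunction with a separate UCB-type bad event of probability $O(\tau^{-p})$ for some $p>1$; the product $(\log \tau/\tau)\cdot \tau^{-p}$ is then summable and folds into the constant, yielding the stated bound.
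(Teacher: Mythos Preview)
Your high-level plan---parallel the U\_PART argument and invoke Theorem~\ref{dPartial} to control the effect of group misclassification---matches the paper's. Where you diverge is in \emph{how} the misclassification is folded in, and that divergence is what creates the $(\log T)^2$ obstacle you flag.

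You condition on the event $\mathcal C_\tau$ at every time step $\tau$ and then try to sum $P(\mathcal C_\tau^c)$ over $\tau$, which naively gives $\sum_\tau \log\tau/\tau = \mathcal O((\log T)^2)$ and forces you to reach for an ad hoc coupling between misclassification and UCB concentration failures. The paper avoids this entirely by observing that misclassification enters the analysis \emph{only} through the value of the weighted frequency $\beta^i_m(t)$, and that $\beta^i_m(t)$ is computed from the classification \emph{snapshot at the single time $t$}, not from a running sum of per-step classification outcomes. So rather than conditioning step-by-step, the paper just shows $\beta^i_m(t)\to 0$ for every $m\in\overline N^i_K$ and then reuses the U\_PART bound verbatim. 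Concretely: the denominator $\sum_{l,k}(n^k_l(t))^{\omega^{i,k}}$ is $\mathcal O(t)$; for same-group users $k$ the numerator contribution is $n^k_m(t)=\mathcal O(\log t)$; for correctly-classified different-group users the exponent $\omega^{i,k}<1$ makes their contribution $o(t)$; and the expected extra contribution to the numerator from a \emph{misclassified} different-group user at time $t$ is at most $n^k_m(t)\cdot P(\text{misclassify at }t)\le \mathcal O(t)\cdot\mathcal O(\log t/t)=\mathcal O(\log t)$. Every piece of the numerator is therefore $o(t)$, so $\beta^i_m(t)\to 0$, and the $(6+\epsilon)$ constant falls out exactly as in U\_PART.

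In short: your conditioning-and-coupling route is not wrong in spirit, but it manufactures a difficulty (time-summed misclassification probabilities) that the paper's snapshot argument never encounters. The coupling you propose at the end is speculative and unnecessary once you recognize that only the time-$t$ classification matters for $\beta^i_m(t)$.
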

\begin{proof}
The proof follows similarly as in the uniform case with partial information exchange with the following explanation.  Notice the last step to establish the $6+\epsilon$ bound is to prove
\begin{align}
\frac{n_{j}(t)}{\sum_{i \in \Omega} n_{i}(t)} \rightarrow 0, \forall j \in \overline{N}_K.
\end{align}
Then similarly here we need to establish
\begin{align}
\frac{\sum_{j \in \mathcal U}n^{\omega^{g_i,g_j}}_m(t)}{\sum_{l \in \Omega}\sum_{j \in \mathcal U}n^{\omega^{g_i,g_j}}_l(t)} \rightarrow 0, \forall m \in \overline{N}^i_K.
\end{align}
To show this, first of all we notice
\begin{align}
\sum_{l \in \Omega}\sum_{j \in \mathcal U}n^{\omega^{g_i,g_j}}_l(t) = \mathcal O(t)
\end{align}
Then for user $j$ within the same group as user $i$ we have
\begin{align}
E[n^{\omega^{g_i,g_j}}_m(t)] = E[n^{j}_m(t)] = \mathcal O(\log t)
\end{align}
For other user $j$ from different groups we know 
\begin{align}
E[n^{\omega^{g_i,g_j}}_m(t)] < E[n^{j}_m(t)] \leq \mathcal O(\log t)
\end{align}
since $\omega^{g_i,g_j} < 1$. Meanwhile the chance of mis-classifying a user from a different group to a same group is upper bounded by $\mathcal O(\frac{\log t}{t})$, and the number of mis-drift is at most given by 
\begin{align}
\mathcal O(t)\cdot \mathcal O(\frac{\log t}{t}) = \mathcal O(\log t)
\end{align} 
which helps us establish 
$
\frac{\sum_{j \in \mathcal U}n^{\omega^{g_i,g_j}}_m(t)}{\sum_{l \in \Omega}\sum_{j \in \mathcal U}n^{\omega^{g_i,g_j}}_l(t)} \rightarrow 0, \forall m \in \overline{N}^i_K.
$

\end{proof}

\section{Numerical results}\label{simulation}


We start with U\_FULL. In our simulation we have three users with five independent options; each user targets the top three options at each time, i.e., $M=K=3, N=5$. Furthermore the five options' reward statistics are given by exponentially distributed random variables. 
The distortion factor at each user for each option is modeled as a Gaussian random variable with certain mean and variance 1. 
\begin{figure}[h!]
\centering
\includegraphics[width=0.6\textwidth,height=0.3\textwidth]{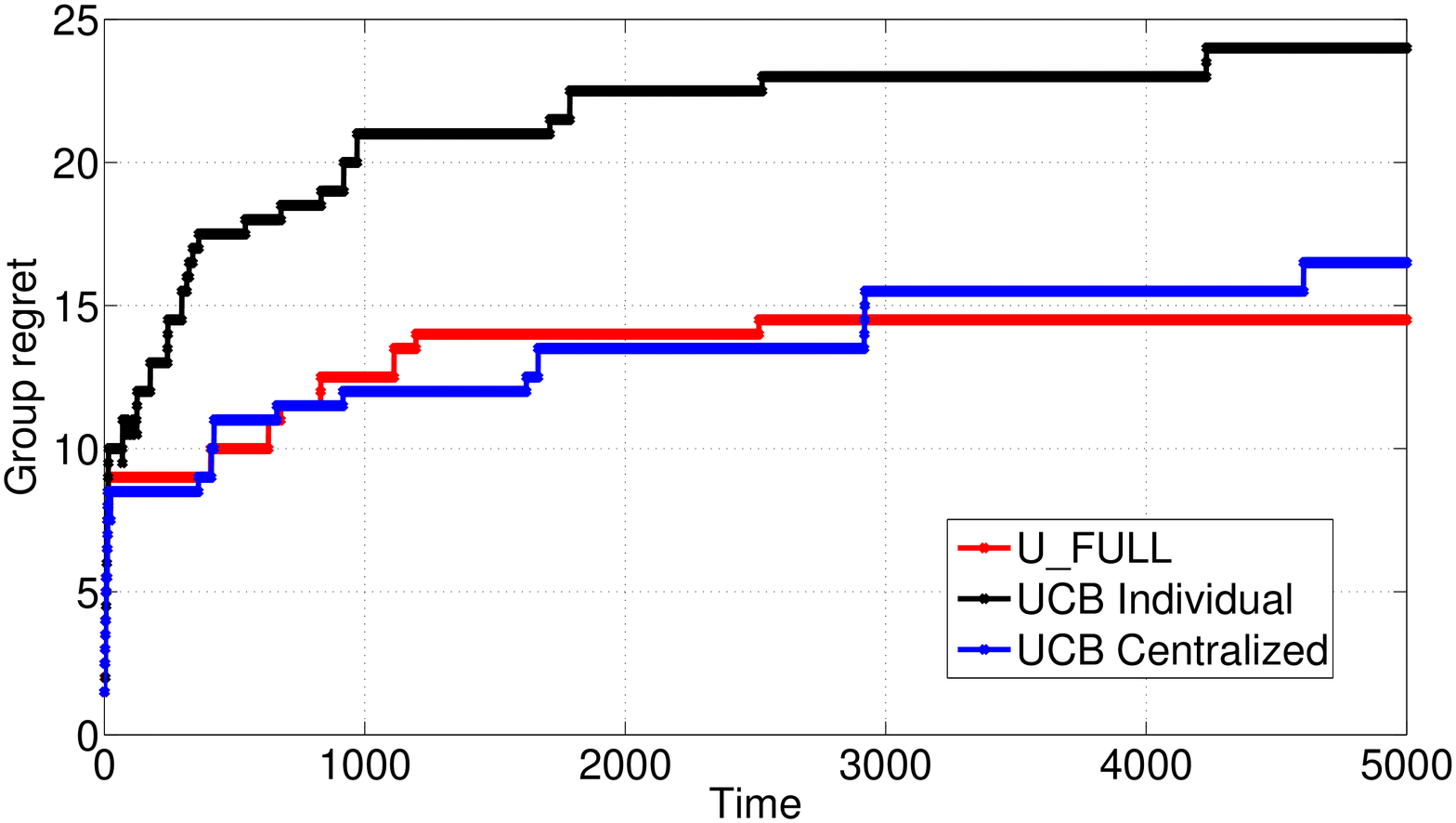}
\caption{Performance comparison of U\_FULL and UCB individual.}\label{finite}
\end{figure}

From Fig. \ref{finite} we see with full information exchange the system's performance can be greatly improved compared with individual learning; moreover, its performance is comparable with a centralized scheme (denoted UCB Centralized in the figure), whereby the $M$ users are centrally controlled and coordinated in their learning using UCB, and allowing simultaneous selection of the same options by multiple users.

Next we show the performance of U\_PART. The simulation setting is the same as the one above and is not repeated here.  
\begin{figure}[h!]
\centering
\subfloat[Performance comparison]{\label{gl}\includegraphics[width=0.5\textwidth,height=0.3\textwidth]{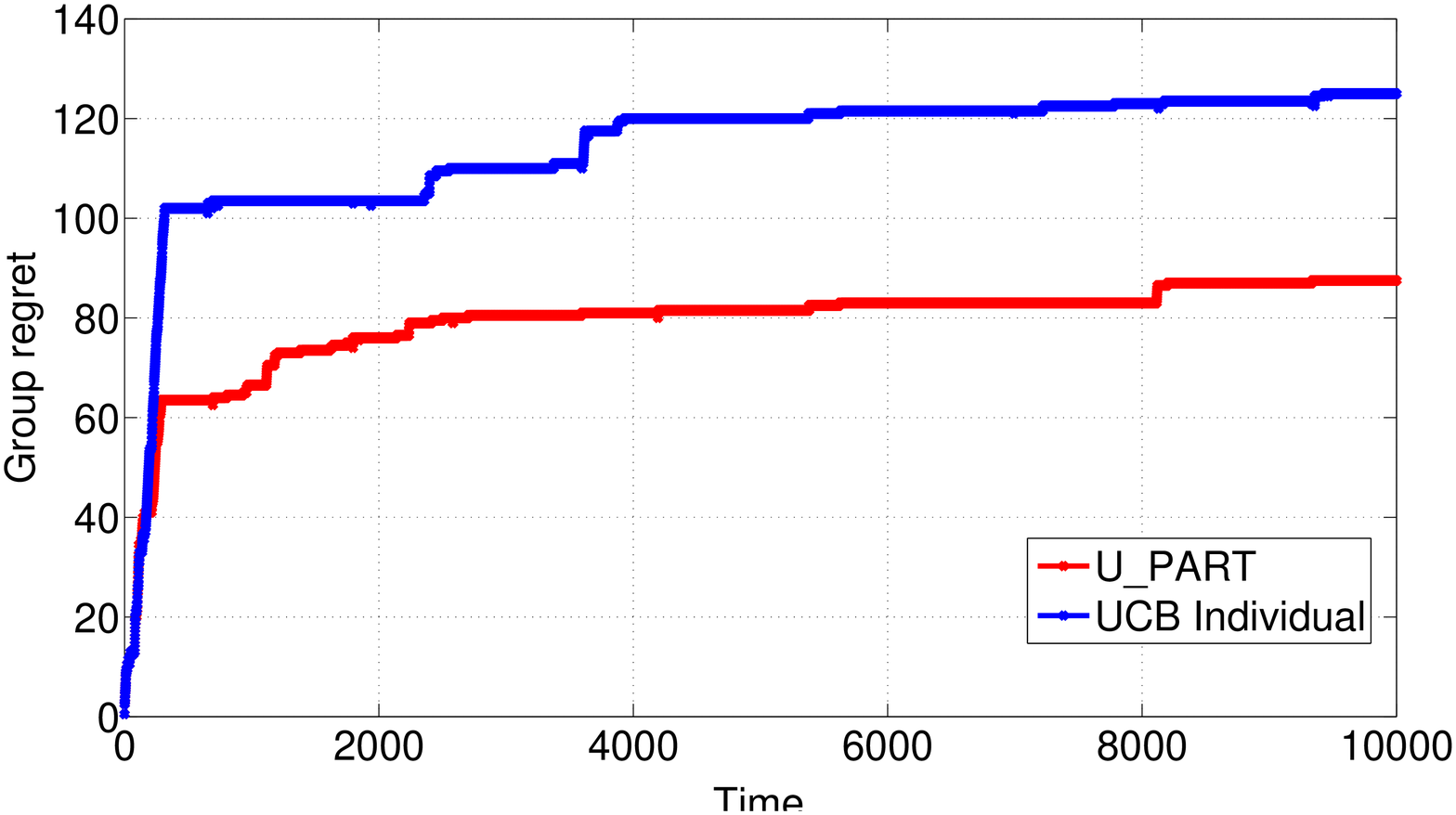}} 
\subfloat[Different $\alpha$]{\label{al1}\includegraphics[width=0.5\textwidth,height=0.3\textwidth]{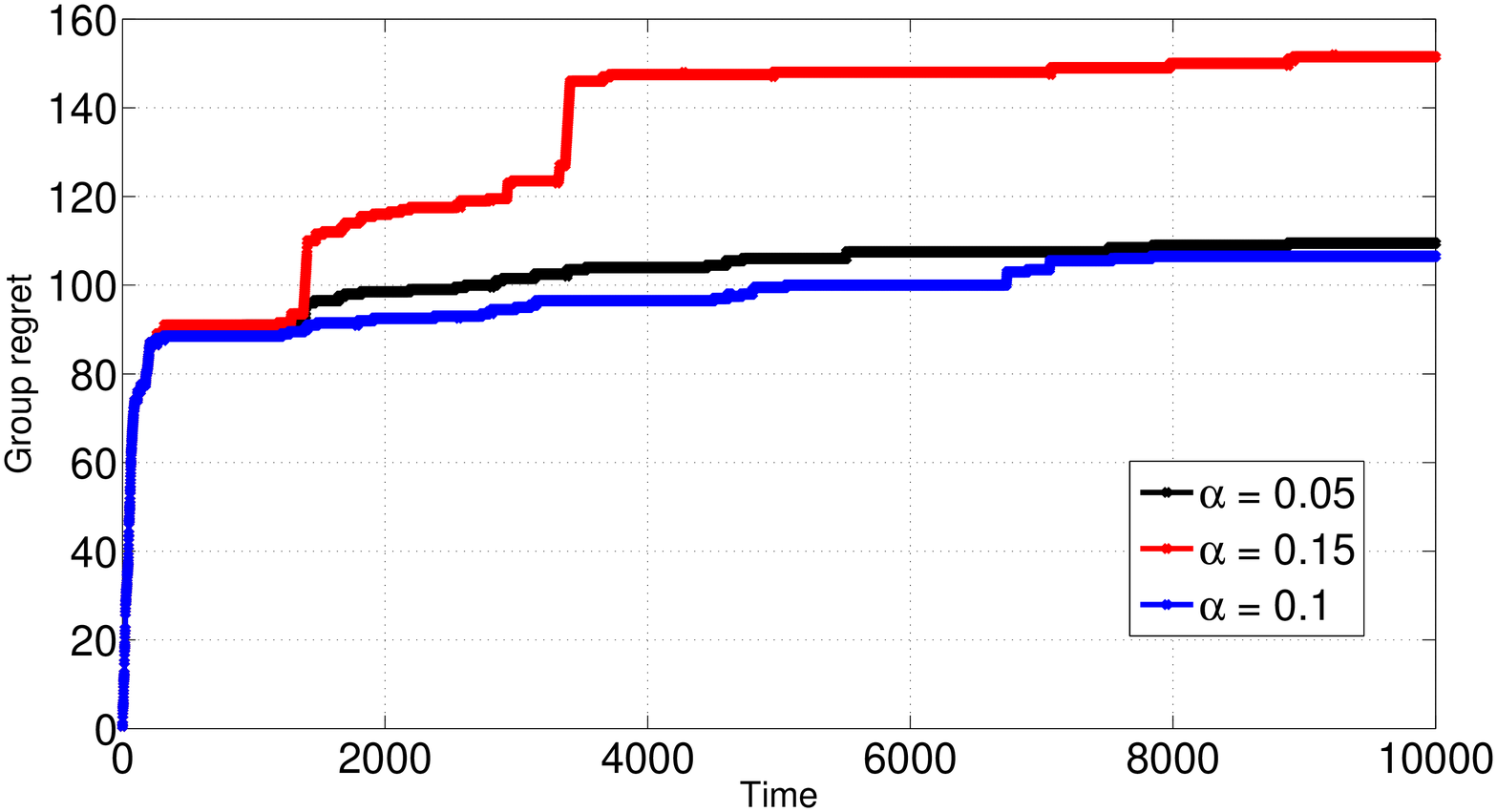}}
\caption{Performance comparison of U\_PART and UCB Individual.}
\end{figure}
From Fig. \ref{gl} we see that U\_PART outperforms multiuser UCB with individual learning.  
We also see from Fig. \ref{al1} that though a larger $\alpha$ results in a larger upper bound, the actual performance does not necessarily increase with $\alpha$.

We end this section by simulating a network with diverse group preferences.  As we mentioned in previous sections, the major difference between learning algorithms of diverse preferences and uniform preference is each user estimates other users' group identity before taking actions over observed/reported samples. Therefore instead of presenting similar regrets results as in the previous cases, we present the mis-classification rate of our algorithm, given in Fig. \ref{err_rate}. 

\begin{figure}[h!]
\centering
\label{alpha}\includegraphics[width=0.6\textwidth,height=0.3\textwidth]{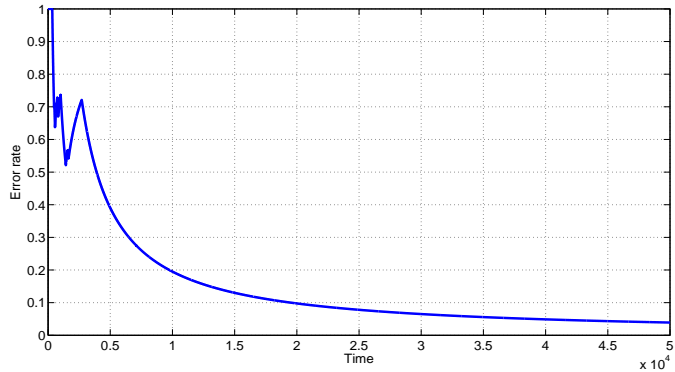}
\caption{Performance of error rate}\label{err_rate}
\end{figure}

\section{Conclusion}\label{conclusion}

In this paper we considered group learning problem in the context of opinion diffusion and analyzed two scenarios: uniform group preference vs. diverse group preferences.  For each case we also considered sharing full vs. partial information, and constructed UCB-like index based group learning algorithms and derived their associated upper bounds on weak regret.  These upper bounds are in general better than the original upper bound obtained by UCB when a user learns in isolation.  This points to the potential gain by combining first-hand and second hand learning.


\bibliographystyle{plain}
\bibliography{myref}
\appendices
\section{Proof of Lemma \ref{uFullLma}:} 
\label{uFullDis}
For simplicity of presentation, we omit all sub and super-scripts in this proof when there is no confusion. 
\begin{align}
&P(|\delta(t)-\delta^{*}|>\epsilon) = P(\delta(t)>\delta^{*}+\epsilon) + P(\delta(t)<\delta^{*}-\epsilon)
\end{align}
Let's consider $P(\delta(t)>\delta^{*}+\epsilon)$ and denote $c = \delta^{*}+\epsilon$.
\begin{align}
P(\delta(t)>c) &= P(\frac{r^i_j(t)}{r^k_j(t)}>c) = P(r^i_j(t) > c \cdot r^k_j(t))
\end{align}

and
\begin{align}
&P(r^i_j(t) > c \cdot r^k_j(t)) = \int_x P(r^i_j(t) > c\cdot x)\cdot P(r^k_j(t)= x) dx \nonumber \\
&=\int_{x \leq \mu^k_j-\epsilon} P(r^i_j(t) > c\cdot x)P(r^k_j(t)= x) dx \nonumber \\
&+ \int_{\mu^k_j-\epsilon < x < \mu^k_j+\epsilon}P(r^i_j(t) > c\cdot x)\cdot P(r^k_j(t)= x) dx\nonumber \\
&+\int_{x \geq \mu^k_j+\epsilon} P(r^i_j(t) > c\cdot x)\cdot P(r^k_j(t)= x) dx
\end{align}
Next we first turn to the analysis of hitting times of each option $T^i_j(t)$ and $T^k_j(t)$.
\begin{lemma}
For $j \in N_K$, 
\begin{align}
E[T^i_j(t)] = \mathcal O(t), E[T^k_j(t)] = \mathcal O(t)
\end{align}
and for $j \in \overline{N}_K$, 
\begin{align}
E[T^i_j(t)] = \mathcal O(\log t) , E[T^k_j(t)]& = \mathcal O(\log t)
\end{align}
\end{lemma}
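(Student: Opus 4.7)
The plan is to split the analysis by whether option $j$ is optimal ($j \in N_K$) or suboptimal ($j \in \overline{N}_K$) for the user in question. The upper bound $E[T^i_j(t)] = \mathcal O(t)$ for $j \in N_K$ is immediate: since each user plays exactly $K$ options per time step, $T^i_j(t) \leq t$ deterministically, so $E[T^i_j(t)] \leq t$. The informative content is that once the suboptimal bound is established, the identity $\sum_{j \in \Omega} T^i_j(t) = Kt$ forces $\sum_{j \in N_K} T^i_j(t) = Kt - \mathcal O(\log t)$, so the top-$K$ options collectively take up essentially all plays and are in fact hit on the order of $t$ on average. The same argument applies verbatim to user $k$, since all users run the same U\_FULL policy.

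For $j \in \overline{N}_K$, I would follow the standard UCB template, adapted to the pooled, rescaled sample mean $\hat\mu^i_j(t)$ that appears in the U\_FULL index. Setting $\ell = \lceil 8 \log t / (M \Delta^i_j) \rceil$, the usual chain of inequalities gives
\begin{align*}
E[T^i_j(t)] \leq \ell + \sum_{s=\ell}^{t}\Bigl[\,P(\text{some top-$K$ pooled index is under-estimated}) + P(\text{pooled index of $j$ is over-estimated})\,\Bigr].
\end{align*}
Each tail probability is then controlled by Chernoff--Hoeffding applied to the rescaled observations in $\hat\mu^i_j$. The subtlety is that the summands $\tilde\delta^{i,k}_j(m)\,X^k_j(m)$ are not i.i.d.\ draws with mean $\mu^i_j$; they only approximate such draws to the extent that $\tilde\delta^{i,k}_j(m) \approx \delta^{i,k}_j$. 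To handle this, I would condition on the good event $\{|\tilde\delta^{i,k}_j(m)-\delta^{i,k}_j|<\epsilon \text{ for all } k\neq i\}$, which by Lemma~\ref{uFullLma} holds with probability at least $1-(M-1)/t^{d_U}$. On this event Hoeffding applies with a small bias of order $\epsilon \cdot \max_k \mu^k_j$, which can be absorbed into $\Delta^i_j$ by taking $\epsilon$ small enough; off this event the contribution sums to $\mathcal O(1)$ via $\sum_t 1/t^{d_U}$ (for $d_U>1$; otherwise one sharpens Lemma~\ref{uFullLma}).

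The main obstacle is the measurability/coupling issue: the rescaled samples $\tilde\delta^{i,k}_j(m)\,X^k_j(m)$ are neither independent across $m$ nor across $k$, since $\tilde\delta^{i,k}_j(m)$ is itself a function of past observations of both users $i$ and $k$. The cleanest way around this is to use the good event from Lemma~\ref{uFullLma} as a pre-filter and, on that event, replace $\tilde\delta^{i,k}_j(m)$ by the deterministic $\delta^{i,k}_j$ up to an $\epsilon$-shift, reducing the problem to a standard pooled-UCB concentration bound for genuinely i.i.d.\ (rescaled) samples with mean $\mu^i_j$. Once that reduction is made, the rest follows the template of \cite{Auer:2002:FAM:599614.599677} essentially verbatim, giving $E[T^i_j(t)] = \mathcal O(\log t)$, and then the counting identity above closes out the optimal case.
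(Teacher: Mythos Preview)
Your proposal contains a circularity that the paper's argument specifically avoids. The lemma you are proving appears \emph{inside} the proof of Lemma~\ref{uFullLma} (the distortion-estimation bound) as an auxiliary step: the paper uses the hitting-time orders here to bound $E[e^{-T^k_j(t)}]$ and thereby establish $P(|\tilde\delta^{i,k}_j(t)-\delta^{i,k}_j|>\epsilon)\leq 1/t^{d_U}$. Your plan for the suboptimal case conditions on the good event $\{|\tilde\delta^{i,k}_j(m)-\delta^{i,k}_j|<\epsilon\}$ and controls its complement by invoking Lemma~\ref{uFullLma} --- the very statement this sub-lemma is supposed to help prove. The UCB-template attack you outline is essentially the content of the proof of Theorem~\ref{uFullThm}, which legitimately uses Lemma~\ref{uFullLma}; transplanting that argument here collapses the logical order.

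The paper sidesteps this by a sandwich argument that never touches the distortion estimates: it notes the information ordering $I_{\text{decen.}}\subseteq I_{\text{group}}\subseteq I_{\text{centr.}}$ and brackets $E[T^i_j(t)]$ between the corresponding quantities for purely decentralized (single-user UCB) and fully centralized learning. Since both extremes are known from prior work (\cite{LR85}, \cite{1104491}) to be $\mathcal O(t)$ for $j\in N_K$ and $\mathcal O(\log t)$ for $j\in\overline{N}_K$, the claim follows without appealing to any property of $\tilde\delta^{i,k}_j$. The paper's Remark immediately after the sub-lemma makes this division of labor explicit: the precise $\mathcal O(\log t)$ upper bound for U\_FULL on suboptimal arms is deferred to the proof of Theorem~\ref{uFullThm}, once Lemma~\ref{uFullLma} is in hand; the sub-lemma here is only meant to supply the preliminary order information (in particular $E[T^k_j(t)]\geq \mathcal O(\log t)$) needed to get Lemma~\ref{uFullLma} off the ground.
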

\begin{proof} The complete proof can be obtained by following Lai(\cite{LR85}) and Zhao(\cite{1104491})'s work. However this is not the emphasize of our paper and requires lots of repetition from previous works; thus we only sketch the basic proof and idea here.

Define the corresponding term under centralized and decentralized system as
$$
E_{\text{centr.}}[T^i_j(t)],E_{\text{decen.}}[T^k_j(t)], E_{\text{centr.}}[T^i_j(t)],E_{\text{decen.}}[T^{k}_j(t)]
$$
and we have
\begin{align}
E_{\text{centr.}}[T^i_j(t)] \leq E[T^i_j(t)] \leq E_{\text{decen.}}[T^i_j(t)]
\end{align}
\begin{align}
E_{\text{centr.}}[T^{k}_j(t)] \leq E[T^{k}_j(t)] \leq E_{\text{decen.}}[T^{k}_j(t)]
\end{align}
The reason for above two inequalities is that the available information for each model making decision follows as
\begin{align}
I_{\text{decen.}} \subseteq I_{\text{group}} \subseteq I_{\text{centr.}}
\end{align}
And we know from previous works $\forall j \in N_K$
\begin{align}
E_{\text{centr.}}[T^i_j(t)] = \mathcal O(t), E_{\text{decen.}}[T^k_j(t)] &= \mathcal O(t)
\end{align}
and  $\forall j \in \overline{N}_K$
\begin{align}
E_{\text{centr.}}[T^i_j(t)] = \mathcal O(\log t), E_{\text{decen.}}[T^k_j(t)] &= \mathcal O(\log t)
\end{align}
\end{proof}
\begin{remark}
For our modified UCB we are sure for $j \in \overline{N}_K$, $E[T_j(t)] \geq \mathcal O(\log t)$ since it is not necessarily the optimal policy; and we will show later in the next proof we indeed we have $E[T_j(t)] = \mathcal O(\log t)$ and $E[T_j(t)] = \mathcal O(t)$ for $j \in N_K$. 
\end{remark}

Consider $x = \mu^k_j \pm \epsilon$. In this case
$
P(r^k_j(t) = x) \rightarrow 1
$. 
For other $x$, we have
$
P(r^k_j(t) = x) \leq e^{-2\epsilon^2 T^{k}_j(t)}
$. 
Therefore we have
\begin{align}
&\int_{x \leq \mu^k_j-\epsilon} P(r^i_j(t) > c\cdot x)\cdot P(r^k_j(t) = x) dx \nonumber \\
&\leq \int_{x \leq \mu^k_j-\epsilon} P(r^i_j(t) > c\cdot x)\cdot e^{-2\epsilon^2 T^k_j(t)} dx \nonumber  \\
&\leq e^{-2\epsilon^2 T^k_j(t)} \int_{x \leq \mu^k_j-\epsilon} 1 dx =\mathcal O(e^{-T^k_j(t)}) \label{term1}
\end{align}
and 
\begin{align}
& \int_{\mu^k_j-\epsilon \leq x \leq \mu^k_j+\epsilon} P(r^i_j(t) > c\cdot x)\cdot P(r^k_j(t) = x) dx \nonumber \\
&\leq \int_{\mu^k_j-\epsilon \leq x \leq \mu^k_j+\epsilon}P(r^i_j(t) > c\cdot x) dx \nonumber \\
&\leq \int_{\mu^k_j-\epsilon \leq x \leq \mu^k_j+\epsilon} e^{-2\epsilon^2 T^i_j(t)} dx =\mathcal O(e^{-T^i_j(t)})
\end{align}
\begin{align}
&\int_{x \geq \mu^k_j+\epsilon}  P(r^i_j(t) > c\cdot x)\cdot P(r^k_j(t) = x) dx \nonumber \\
&\leq e^{-2\epsilon^2 T^k_j(t)}\int_{x \geq \mu^k_j+\epsilon} P(r^i_j(t) > c\cdot x) dx \nonumber \\
&\leq e^{-2\epsilon^2 T^k_j(t)}\int_{x \geq \mu^k_j+\epsilon} e^{-T^k_{j}(t)x^2\epsilon} dx = \mathcal O(e^{-(T^i_j(t)+T^k_j(t))})
\end{align}
Now we investigate the expectation of each term as following. Take (\ref{term1}) for example
\begin{align}
\mathcal O (E[e^{-T^k_j(t)} ] ) < \mathcal O (e^{-(e^{-1}+1)\cdot E[T^k_j(t)]}  )
\end{align}
As $E[T^k_j(t)] \geq \mathcal O(\log t)$ ($E[T^k_j(t)] = \mathcal O(\log t)$ or $E[T^k_j(t)] = \mathcal O(t)$ depending on $j$.), we have
\begin{align}
\mathcal O (E[e^{-T^k_j(t)} ] ) <  \mathcal O (e^{-(e^{-1}+1)\cdot \log t}  ) \leq 1/t^d
\end{align}
here $d$ is some finite positive number. Other terms can be similarly analyzed and we proved the lemma.


\section{ Proof of Theorem \ref{uFullThm}:}\label{uFull}

We follow Auer.'s notation in \cite{Auer:2002:FAM:599614.599677} where UCB is first introduced and proved; and we try to bound the number of sub-optimal arms that are played. In order to make the proof smooth, we made another assumption here : each user only takes at most the same amount of samples (regarding order) from other users for the each option. Consider such an option $j$ of user $i$. 
\begin{align}
T^i_j(t) &\leq \l+\sum_{n=l+1}^t\{I^i_n = j,T^i_j(n-1) \geq \l\} \nonumber \\
&\leq \l + \sum_{n=l+1}^t \{\min_{0<s^i<n}r^{i,*}_{s^i} + c^i_{n-1,s^i} \leq \max_{\l \leq s^i_j < n}r^i_{j,s^i_j} + c^i_{n-1,s^i_j} \} \nonumber \\
&\leq \l + \sum_{n=l+1}^t \{\min_{0<s^i<n}r^{i,*}_{s^i} + c^i_{n-1,s^i} \leq \max_{\l \leq s^i_j < n}r^i_{j,s^i_j} + c^i_{n-1,s^i_j} \} \nonumber \\
&\leq \l + \sum_{n=1}^{\infty} \sum_{s^i=1}^{n-1}\sum_{s^i_j = \l}^{n-1}\{r^{*,i}_s + c^i_{n,s^i} \leq r^{i}_{j,s^i_j}+c_{n,s^i_j}\}
\end{align}
Observe $r^{i,*}_{s^i} + c^i_{n,s^i} \leq r^i_{j,s^i_j}+c_{n,s^i_j}$ implies that at least one of the following must hold,
\begin{align}
r^{i,*}_{s^i} &\leq \mu^{i,*}-c^i_{n,s^i}, r^i_{j,s^i_j} \geq \mu^i_j + c_{n,s^i_j}, \mu^{i,*} < \mu^i_j + 2c^i_{n,s^i_j}
\end{align}
We bound each term. First of all we want to know the error of calculating the sample means dues to imperfect distortion recovery. We have the number of errors upto time $t$ bounded by (take user $k$ for example)
\begin{align}
\sum_{n=1}^{s^k_j} \frac{1}{n^{d_U}}\leq \frac{(s^k_j)^{1-d_U}-1}{1-d_U} 
\end{align}
Denote the largest deviation by $\delta_{\max}$ we have the distortion factor in sample mean bounded as 
\begin{align}
C\cdot \frac{(s^k_j)^{1-d_U}}{\sum_{u \in \mathcal U} s^u_j}\end{align}
We now show
\begin{align}
\sqrt{\frac{\log n}{\sum_{u \in \mathcal U} s^u_j}} > C \cdot \frac{(s^k_j)^{1-d_U}}{\sum_{u \in \mathcal U} s^u_j}\label{bd}
\end{align}
For $s^i_j = \mathcal O(\log n)$, since $d_U > 0$ we have
\begin{align}
(\sum_{u \in \mathcal U} s^u_j)^{1/2} > C\cdot \frac{ (s^k_j)^{1-d_U}}{\sqrt{\log n}}
\end{align} 
If $s^i_j > \mathcal O(\log n)$, consider two cases. If $s^k_j = \mathcal O(\log n)$, above holds obviously. If $s^k_j > \mathcal O(\log n)$, through the proof of Lemma \ref{uFullLma} we know $d_U \geq 1$ (details omitted) and again we have (\ref{bd}) holds. Therefore we have 
\begin{align}
&P\{r^{i,*}_{s^i} \leq \mu^{i,*}-\sqrt{2}\sqrt{\frac{\log n}{\sum_{u \in \mathcal U} s^u_j}} \pm \frac{C'\cdot \sum_{k \in \mathcal U}(s^k)_j^{1-d_U}}{\sum_{u \in \mathcal U} s^u_j}\} \nonumber \\
&\approx P\{r^{i,*,c}_{s^i} \leq \mu^{i,*}-\sqrt{2}\sqrt{\frac{\log n}{\sum_{u \in \mathcal U} s^u_j}}\} \nonumber \\
&\leq e^{-4\log n} = n^{-4 }
\end{align}
And similarly
\begin{align}
P\{r^i_{j,s^i_j} &\geq \mu^i_j + c^i_{n,s^i_j} \} \leq n^{-4}
\end{align}
For the last term
\begin{align}
\mu^{i,*} &< \mu^i_j + 2c^i_{n,s^i_j}
\end{align}
Let $\l = \lceil \frac{8}{M} \log t/(\Delta^i_j)^2 \rceil$
\begin{align}
&\mu^{i,*}-\mu^i_j-2c^i_{n,s^i_j} \nonumber \\
&\geq \mu^{i,*}-\mu^i_j-2\sqrt{2}\sqrt{\log t/\sum_{u \in \mathcal U} s^u_j} \nonumber \\
&\geq \mu^{i,*}-\mu^i_j-\Delta^i_j = 0
\end{align}
The rest of the proof follows \cite{Auer:2002:FAM:599614.599677} and thus omitted. 

The only thing left to be proved is the number of optimal arms are sensed with order $\mathcal O(t)$ which is needed in proving Lemma \ref{uFullLma}. Suppose there is an optimal option failed to be sensed at order $\mathcal O(t)$, then there must be a sub-optimal arm $j \in \overline{N}_K$ being sensed with order $\mathcal O(t)$. Since we have $E[T^i_j(t)] \leq \sum_{j \in \overline{N}^i_K}\lceil\frac{8 \log t}{M\cdot (\Delta^i_{j})^2}\rceil$, we know the only possible case is with $d_U = 0$ and from the proof of Lemma \ref{uFullLma} we know the number of optimal arms must be bounded as a constant. Therefore check back the UCB index. For the optimal arm we have $\sqrt{\frac{\log t}{\mathcal O(1)}}$ as the bias term in UCB index while for $j$ it is $\sqrt{\frac{\log t}{\mathcal O( t)}}$ we know with a large enough $t$, the index of the optimal arm must be larger than $j$ which contradicts the fact that the optimal arm is only sensed with constant numbers; and we thus proved the theorem. 


\section{ Proof of Theorem \ref{uniform:partial}:}\label{uPart}

In the following literature for simplicity we denote
\begin{align}
c^i_{t,j} &= \sqrt{\frac{2 \log t}{\sum_{u \in \mathcal U}n^u_j(t)}}, \hat{c}^i_{t,j} = \sqrt{\frac{2 \log t}{\sum_{u \in \mathcal U}n^u_j(t)}} - \alpha [1-\beta_{j}(t)]\sqrt{\frac{\log t}{t}}
\end{align}
By following the same trick from Auer.'s work \cite{Auer:2002:FAM:599614.599677}, to bound the regret we need to bound the number of sub-optimal arms that are played. Suppose $j \in \overline{N}_K$
\begin{align}
T^i_j(t) &\leq \l+\sum_{n=\l+1}^t\{I^i_n = j,T^i_j(n-1) \geq \l\} \nonumber \\
&\leq \l + \sum_{n=\l+1}^t \{\min_{0<s^i<n}r^{i,*}_{s^i} + \hat{c}^i_{n-1,s^i} \leq \max_{\l \leq s^i_j < n}r^i_{j,s^i_j} + \hat{c}^i_{n-1,s^i_j}\} \nonumber \\
&\leq \l + \sum_{n=1}^{\infty} \sum_{s^i=1}^{n-1}\sum_{s^i_j = \l}^{n-1}\{r^{i,*}_{s^i} + \hat{c}^i_{n,s^i} \leq r^{i}_{j,s^i_j}+c^i_{n,s^i_j}\}
\end{align}
The following analysis applies for general $i$ and thus we omit the $i$ notation. Observe $r^*_s + \hat{c}_{n,s} \leq r_{j,s_j}+c_{n,s_j}$ implies that at least one of the following must hold,
\begin{align}
r^*_s &\leq \mu^*-\hat{c}_{n,s}, r_{j,s_j} \geq \mu_j + \hat{c}_{n,s_j} , \mu^* < \mu_j + 2\hat{c}_{n,s_j}
\end{align}
We bound each term.
\begin{align}
& P\{r^*_s \leq \mu^*-\hat{c}_{n,s}\}\leq P\{r^*_s \leq \mu^*-(\sqrt{2}-\alpha)\sqrt{\frac{\log n}{n}}\}\leq e^{-2(\sqrt{2}-\alpha)^2\log n} = n^{-2(\sqrt{2}-\alpha)^2}
\end{align}
the first inequality is due to the reason $n \geq s$ and $\beta_{j}(n) \geq 0$. And similarly
\begin{align}
P\{r_{j,s_j} &\geq \mu_j + \hat{c}_{n,s_j} \} \leq n^{-2(\sqrt{2}-\alpha)^2}
\end{align}
Let $\l = \lceil(2\sqrt{2}-\alpha[1-\beta_{j}(t)])^2 \log t/\Delta^2_j \rceil$
\begin{align}
&\mu^*-\mu_j-2\hat{c}_{n,s_j} \nonumber \\
&\geq \mu^*-\mu_j-(2\sqrt{2}-\alpha\cdot [1-\beta_{j}(n)])\sqrt{\log t/s_j} \nonumber \\
&\geq \mu^*-\mu_j-\Delta_j = 0
\end{align}
Therefore
\begin{align}
E[T_j(t)] &\leq \lceil\frac{(2(\sqrt{2}-\alpha\cdot[1-\beta_{j}(t)]))^2 \log t}{\Delta^2_j} \rceil \nonumber \\
&+ \sum_{n=1}^{\infty}\sum_{s=1}^{n-1}\sum_{s_j = \lceil(2\sqrt{2}-\alpha\cdot (1-\beta_{j}(t)))^2 \atop \log t/\Delta^2_j \rceil}^{t-1} (P\{r^*_s \leq \mu^*-\hat{c}_{n,s}\}+P\{r_{j,s_j} \geq \mu_j + \hat{c}_{n,s_j} \}) \nonumber \\
&\leq \lceil\frac{4(\sqrt{2}-\alpha \cdot [1-\beta_{j}(t)])^2 \log t}{\Delta^2_j} \rceil+\sum_{n=1}^{\infty}\sum_{s=1}^{n}\sum_{s_j = 1}^{n}2n^{-2(\sqrt{2}-\alpha)^2}
\end{align}
First notice the second term, if $-2(\sqrt{2}-\alpha)^2 < -3$, i.e.,
$\sqrt{2}-\sqrt{3/2} > \alpha$, the sum converges to a constant; i.e.,
\begin{align}
E[T_j(t)] \leq \lceil\frac{4(\sqrt{2}-\alpha\cdot [1-\beta_{j}(t)])^2 \log t}{\Delta^2_j} \rceil + \text{const.} \label{bound_1}
\end{align}
Next we try to bound $\beta_{j}(t), \forall j \in \overline{N}_K$. Remember that $\beta_{j}(t) = \frac{n_{j}(t)}{\sum_k n_{k}(t)}$ and we know
\begin{align}
n_{j}(t) \leq \lceil\frac{8 \log t}{\Delta^2_{j}} \rceil, \forall j \in \overline{N}_K
\end{align}
As there is no collision (due to decision sharing and collision avoidance), there are always $M\cdot K$ observations from the group. Thus for denominator we know
\begin{align}
\sum_k n_{k}(t) = M\cdot K\cdot n
\end{align}
Therefore
$
\beta_{j}(n) \leq \frac{M\cdot \lceil\frac{8 \log n}{\Delta^2_{j}} \rceil}{M\cdot K \cdot n} \rightarrow 0
$. The last approach is along with $n \rightarrow \infty$.  Therefore with a large enough time horizon $n$, all terms $\beta_{j}(t),  j\in \overline{N}_K$ goes to 0 and thus we have established the bounds as following
\begin{align}
R_G \leq \sum_{j \in \overline{N}_K}\lceil\frac{(6+\epsilon) \log t}{\Delta_{j}}\rceil + \text{const.} 
\end{align} 


\section{ Proof of Theorem \ref{dPartial}:}\label{dPart}

We analyze the probability that the number of chosen sub-optimal arms is higher than the optimal arms at time $t$. Consider $j \in \overline{N}_K$ and $* \in N_K$. 
\begin{align}
&P(\sum_{n=1}^t I_{d(n) = j} \geq \sum_{n=1}^t I_{d(n)=*})\nonumber \\
&=P(\sum_{n=1}^t I_{d(n) = j} \geq \sum_{n=1}^t I_{d(n)=*}|\sum_{n=1}^t I_{d(n)=*} \geq \mathcal O(t))\cdot P(\sum_{n=1}^t I_{d(n)=*} \geq \mathcal O(t)) \nonumber \\
&+P(\sum_{n=1}^t I_{d(n) = j} \geq \sum_{n=1}^t I_{d(n)=*}|\sum_{n=1}^t I_{d(n)=*} < \mathcal O(t))\cdot P(\sum_{n=1}^t I_{d(n)=*} < \mathcal O(t))
\end{align}
Consider the first term 
\begin{align}
&P(\sum_{n=1}^t I_{d(n) = j} \geq \sum_{n=1}^t I_{d(n)=*}|\sum_{n=1}^t I_{d(n)=*} \geq \mathcal O(t)) \leq P(\sum_{n=1}^t I_{d(n) = j} \geq \mathcal O(t)) \nonumber \\
&\leq \frac{E[\sum_{n=1}^t I_{d(n) = j} ]}{\mathcal O(t) }=\frac{E[T_j(t)]}{\mathcal O(t)} \leq \frac{\mathcal O(\log t)}{\mathcal O(t)}
\end{align}
Now consider $P(\sum_{n=1}^t I_{d(n)=*} < \mathcal O(t))$.
\begin{align}
&P(\sum_{n=1}^t I_{d(n)=*} < \mathcal O(t)) \leq P(\sum_{n=1}^t I_{d(n) \in \overline{N}_K} \geq \mathcal O(t))\nonumber \\
&\leq \frac{E[\sum_{n=1}^t I_{d(n)\in \overline{N}_K}]}{\mathcal O(t)} \leq \frac{\sum_{k \in \overline{N}_K}E[\sum_{n=1}^t I_{d(n)=k}]}{\mathcal O(t)} \leq \frac{\mathcal O(\log t)}{\mathcal O(t)}
\end{align}
Proved.

\end{document}